\def\eqref#1{equation~\ref{#1}}
\def\1{\bm{1}}
\DeclareMathAlphabet{\mathsfit}{\encodingdefault}{\sfdefault}{m}{sl}
\SetMathAlphabet{\mathsfit}{bold}{\encodingdefault}{\sfdefault}{bx}{n}
\newcommand{\cmark}{\ding{51}}
\newcommand{\xmark}{\ding{55}}
\theoremstyle{plain}
\newtheorem*{theorem*}{Theorem}
\newtheorem{proposition}{Proposition}
\theoremstyle{definition}
\theoremstyle{remark}
\newtheorem{example}{Example}
\DeclareMathOperator{\erf}{erf}
\title{%
Gradient-Free Generation for \\  Hard-Constrained Systems
}
\author{
Chaoran Cheng$^1$\thanks{Work done during internship at AWS.},\,\,
Boran Han$^2$,\,\,
Danielle C. Maddix$^2$,\,\,
Abdul Fatir Ansari$^2$,\,\,\\
\textbf{
Andrew Stuart$^3$,\,\,
Michael W. Mahoney$^4$,\,\,
Yuyang Wang$^2$}\\
$^1$University of Illinois Urbana-Champaign\\
$^2$Amazon Web Services\\
$^3$Stores Foundational AI, Amazon\\
$^4$Amazon SCOT\\
\texttt{chaoran7@illinois.edu} \\
\texttt{\{boranhan,dmmaddix,ansarnd,andrxstu,zmahmich,yuyawang\}@amazon.com}
}
\begin{document}

\maketitle

\begin{abstract}
\vspace{-0.1cm}
Generative models that satisfy hard constraints are critical in many scientific and engineering applications, where physical laws or system requirements must be strictly respected. Many existing constrained generative models, especially those developed for computer vision, rely heavily on gradient information, which is often sparse or computationally expensive in some fields, e.g., partial differential equations (PDEs). 
In this work, we introduce a novel framework for adapting pre-trained, unconstrained flow-matching models to satisfy constraints exactly in a zero-shot manner without requiring expensive gradient computations or fine-tuning. 
Our framework, \emph{ECI sampling}, alternates between extrapolation (E), correction (C), and interpolation (I) stages during each iterative sampling step of flow matching sampling to ensure accurate integration of constraint information while preserving the validity of the generation. 
We demonstrate the effectiveness of our approach across various PDE systems, showing that ECI-guided generation strictly adheres to physical constraints and accurately captures complex distribution shifts induced by these constraints. 
Empirical results demonstrate that our framework consistently outperforms baseline approaches in various zero-shot constrained generation tasks and also achieves competitive results in the regression tasks without additional fine-tuning. Our code is available at \url{https://github.com/amazon-science/ECI-sampling}.
\vspace{-0.1cm}
\end{abstract}

\section{Introduction}

Diffusion and flow matching models have achieved remarkable success in generative tasks of image generation \citep{ho2020denoising,esser2024scaling}, language modeling \citep{lou2023discrete,gat2024discrete}, time series prediction \citep{lin2024diffusion,kollovieh2024predict}, and functional data modeling \citep{lim2023score,kerrigan2023functional}. 
Constrained generation built upon these generative models for solving various inverse problems has also been explored in the image domain \citep{kawar2022denoising,ben2024d}. These approaches predominantly rely on gradient information with respect to some cost function as a soft constraint.
While such soft-constrained methods have been successful in the image domain, applications in other domains often require the generation to adhere to certain constraints strictly. 
Such hard-constrained generation, i.e., generative modeling where the natural hard constraints are \emph{requirements} and not just \emph{suggestions}, is crucial for tasks in many scientific domains. 
For example, in scientific computing, numerical simulations often require generated solutions to adhere to specific physical constraints (energy or mass conservation~\citep{hansen2023learning, mouli2024usinguncertaintyquantificationcharacterize}) and satisfy boundary condition constraints on the values or derivatives of the solutions ~\citep{saad2022guiding}. 
An intuitive example of the generation of PDE solutions is demonstrated in Figure~\ref{fig:setup}, where the solution set narrows and shifts when the constraint is imposed.
While existing approaches for constrained generation in image inverse problems can always be adapted 
\citep{esser2024scalingrectifiedflowtransformers, ansari2024chronos}, hard-constrained generation for complex systems like PDE solutions presents the following challenges:

\begin{figure}[htb]
    \centering
    \includegraphics[width=\linewidth]{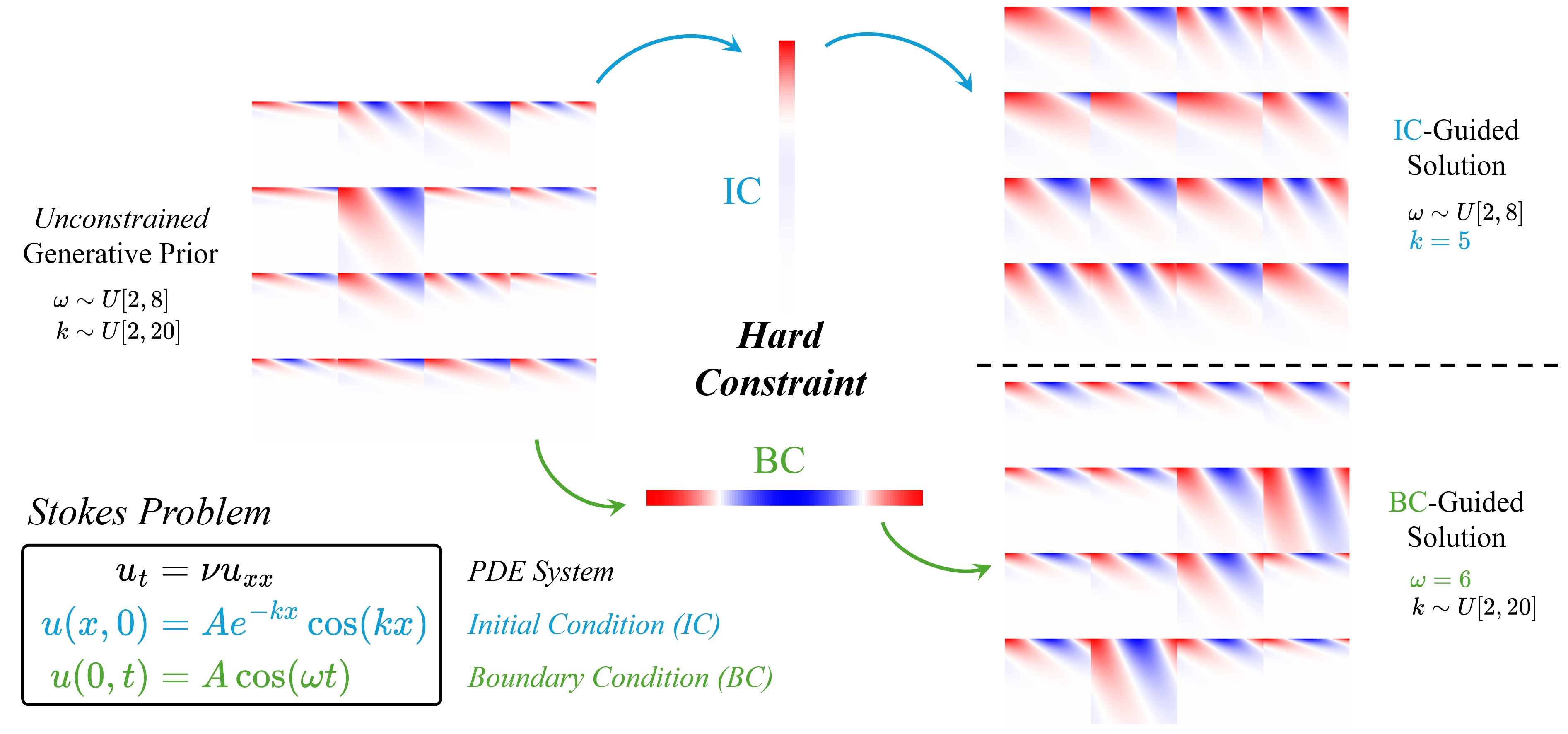}
    \vspace{-1em}
    \caption{Hard-constrained generation of PDE solutions with the boundary condition (BC) or initial condition (IC) prescribed \textit{a posteriori}. $\omega,k$ are PDE parameters that determine the IC and BC.}
    \label{fig:setup}
\end{figure}

\textbf{Scarcity of constraint information.}
In contrast to computer vision (CV) tasks, e.g., image inpainting, which typically assumes a considerable amount of context information, common constraints like BC and IC in Figure~\ref{fig:setup} have measure zero with respect to the spatiotemporal domain.
For example, a standard setting with a spatiotemporal resolution of $100 \times 100$ leads to only $1\%$ of known pixel values as context. This is significantly less than the context provided in a typical CV application.

\noindent
\textbf{Exact constraint satisfaction.}
The exact conservation of mass, energy, or momentum is often essential in the simulation of PDEs \citep{leveque1990numerical} for ensuring physically feasible and consistent solutions.
This is in contrast with inverse problems in CV, e.g., super-resolution and de-blurring, where constraints are often implicitly defined as modeling assumptions and evaluation metrics, e.g., peak signal-to-noise ratio (PSNR) do not directly depend on the exact satisfaction of these constraints. Existing CV approaches fail to guarantee exact satisfaction of constraints.

\noindent
\textbf{Issues with gradient-based methods.}
Existing zero-shot frameworks for constrained generation predominantly rely on gradient guidance from a differentiable cost function on the final step of generation.
This information can be prohibitively expensive, especially in large 3D PDE spatiotemporal systems.
Previous work has also indicated the drawbacks of these soft-constrained gradient-based approaches. Well-known limitations include gradient imbalances in the loss terms \citep{wang2020understanding,wang2021learning} and ill-conditioning \citep{krishnapriyan2021characterizing}, which can lead to failure modes in scientific machine learning (SciML) tasks.

To address these challenges for hard-constrained generation, we propose a general framework that adopts a gradient-free and zero-shot approach for guiding unconstrained pre-trained flow matching models. 
For the diversity of practical hard constraints for PDE systems, our proposed framework provides a unified and efficient approach without the need for expensive fine-tuning or gradient backpropagation. We term our framework \emph{ECI sampling} since it interleaves \textit{extrapolation (E)}, \textit{correction (C)}, and \textit{interpolation (I)} stages at each iterative sampling step.  ECI sampling effectively and accurately captures the distribution shift imposed by the constraints and maintains the consistency of the generative prior.
We summarize our main contributions as follows:
\begin{itemize}[leftmargin=*]
\item{\textbf{Unified gradient-free generation framework.}} 
We introduce ECI sampling, a unified gradient-free sampling framework for guiding an unconstrained pre-trained flow matching model. By interleaving extrapolation, correction, and interpolation stages at each iterative sampling step, ECI offers fine-grained iterative control over the flow sampling that can accurately capture the distribution shift imposed by the constraints and maintain the consistency of the system.
\item{\textbf{Exact and efficient satisfaction of hard constraints.}}
ECI sampling addresses the unique challenges imposed by hard constraints. The memory- and time-efficient gradient-free approach guarantees the exact satisfaction of constraints and mitigates gradient issues known with existing gradient-based methods in CV domains.
\item{\textbf{Zero-shot performance on generative and regression tasks.}}
Comprehensive generative metrics of distributional properties manifest the superior generative performance of our ECI sampling compared with various existing zero-shot guidance methods on various PDE systems. 
We also show that ECI sampling can be applied to zero-shot regression tasks, still achieving competitive performance with state-of-the-art Neural Operators (NOs) \citep{li2020fourier} that are directly trained on the regression tasks.
\end{itemize}

\section{Related Work}

\paragraph{Diffusion and Flow Matching Models.}

We first review existing generative models for functional data that serve as the generative prior for the spatiotemporal solutions in our approach. 
Diffusion models \citep{song2020improved,ho2020denoising,song2020denoising} rely on a variational bound for the log-likelihood and learn a reverse diffusion process to transform prior noise into meaningful samples. 
\citet{lim2023score} propose the diffusion denoising operator (DDO) to extend diffusion models to function spaces.
Flow matching models \citep{lipman2022flow,liu2022flow} are a family of continuous normalizing flows that learn a time-dependent vector field that defines the data dynamics via a flow ordinary differential equations (ODEs).
\citet{kerrigan2023functional} proposes functional flow matching (FFM) as an extension of existing flow matching models for image generation. 
Compared to DDO, FFM has a more concise mathematical formulation and better empirical generation results. 
We also note the close connection between diffusion and flow models in \citet{lipman2022flow,albergo2023stochastic}. Therefore, while we focus on guiding flow-based models in this work, we also compare our approach with a wide range of diffusion-based methods. 

\vspace{-3mm}
\paragraph{Constrained Generation.}

Flow-based generative modeling in function spaces has not been explored until recently, and constrained generation for PDE systems remains largely unexplored. The neural process (NP) \citep{garnelo2018neural,kim2019attentive,sitzmann2020implicit} is one traditional constrained generation approach that learns to map a context set of observed input-output pairs to a distribution over regression functions.
\citet{hansen2023learning} introduce a generic model-agnostic approach to control the variance of the generation and enjoy the exact satisfaction of constraints. 
\citet{lippe2024pde} use gradient guidance from the constraint to guide each sampling step of the pre-trained diffusion model in a zero-shot fashion, similar to diffusion posterior sampling in \citet{chung2022diffusion}. 
\citet{huang2024diffusionpde} further incorporate the gradients from physics-informed losses \citep{shu2023physics}. 
Other approaches focus on controlling NOs in regression tasks \citep{negiar2022learning,saad2022guiding,li2024physics, mouli2024usinguncertaintyquantificationcharacterize}. As the sampling for flow or diffusion models is an iterative procedure, these methods cannot be directly adopted to provide iterative control. They can be, however, incorporated in our correction stage (see Section~\ref{sec:method}).

\vspace{-3mm}
\paragraph{Inverse Problems.} 
Zero-shot generation for pre-trained diffusion and flow models has been explored in the image domain for solving various inverse problems, including image inpainting, deblurring, and superresolution \citep{bai2020deep}.
Existing approaches predominantly rely on gradient guidance. \citet{liu2023flowgrad,ben2024d,wang2024dmplug} propose to modify the prior noise or vector field by differentiating through the ODE solver, thus being extremely time- and memory-consuming. 
\citet{pan2023adjointdpm,pan2023towards} use the adjoint sensitivity method to mitigate the high memory assumption. 
Other works propose gradient-free control, usually with strong prior assumptions on the constraints to derive control at each sampling step. 
\citet{lugmayr2022repaint} propose to mix forward diffusion steps from the context with the model's prediction, and \citet{kawar2022denoising} use a similar variational approach to solve linear inverse problems.

\begin{table}[ht] 
\small
\centering
\caption{Comparison between existing constrained generation methods and our ECI sampling.
}
\label{tab:model}
\begin{tabular}{@{}lccccc@{}}
\toprule
  & Zero-shot & Gradient-free & Exact constraint & Iterative control \\ \midrule
Conditional FFM [\citenum{kerrigan2023functional}] & \xmark & \cmark & \xmark & \cmark \\
ANP [\citenum{kim2019attentive}] & \cmark & \cmark & \xmark & \xmark \\
ProbConserv [\citenum{hansen2023learning}] & \cmark & \cmark & \cmark & \xmark \\
DiffusionPDE [\citenum{huang2024diffusionpde}] & \cmark & \xmark & \xmark & \cmark \\
D-Flow [\citenum{ben2024d}] & \cmark & \xmark & \xmark & \cmark \\
ECI (ours) & \cmark & \cmark & \cmark & \cmark \\ \bottomrule
\end{tabular}
\end{table}

Although these existing methods have achieved decent generation results for inverse problems in image generation, the approximate enforcement of physical laws (soft-constrained) ~\citep{negiar2022learning} failed to address the challenges imposed by hard constraints \citep{wang2020understanding,wang2021learning,krishnapriyan2021characterizing}.
Our proposed zero-shot guidance approach focuses on the challenges of exact constraint satisfaction, using fine-grain iterative control at each sampling step for more consistent generations.
The gradient-free approach is also more efficient and provides a unified framework for various linear and non-linear constraints. 
We summarize the major differences between the existing methods and our proposed method in Table~\ref{tab:model}.

\vspace{-2mm}
\section{Method} 
\label{sxn:main-method}

\subsection{Preliminary}
\label{subsec:prelim}
\paragraph{Problem Definition.} 
Consider a PDE system $\mathcal{F}_\phi u(x)=0, x\in\mathcal{X}\subseteq \mathbb{R}^D$ with PDE parameters $\phi\in\Phi$. 
For a fixed PDE family $\mathcal{F}$, let $\mathcal{U_{F}}=\{u(x):\exists \phi\in\Phi, \mathcal{F}_\phi u(x)=0, x\in \mathcal{X}\}$ denote the set of all the plausible PDE solutions by varying the PDE parameters. 
Consider some constraint operator $\mathcal{G}u(x)=0,x\in\mathcal{X_G}\subseteq \mathcal{X}$ defined on a subset of the PDE domain, and let $\mathcal{U_{G}}=\{u(x):\mathcal{G}u(x)=0,x\in\mathcal{X_G}\}$ denote the solution set for the constraint.
We are interested in finding a subset of solutions $\mathcal{U_{F|G}}:=\mathcal{U_{F}}\cap\mathcal{U_{G}}\subseteq\mathcal{U_{F}}$ in which both the original PDE and the constraint are satisfied. 
We assume that the prior solution set $\mathcal{U_{F}}$ can be captured by a generative model, e.g., an FFM pre-trained on a solution set. We aim to guide the pre-trained model to satisfy the constraint operator in a zero-shot fashion towards the narrowed solution set $\mathcal{U_{F|G}}$. A more mathematically rigorous definition using measure theoretic terms is provided in Appendix~\ref{sec:formal_def}.

\paragraph{Prior Generative Model.}
FFM \citep{kerrigan2023functional} extends the flow matching framework \citep{lipman2022flow} to model measures over the Hilbert space of continuous functions. 
Given a set $\mathcal{U}$ of well-behaved (see Appendix~\ref{sec:ffm}) functions $u:\mathcal{X}\to\mathbb{R}$, FFM learns a time-dependent vector field operator $v_t:\mathcal{U}\times[0, 1]\to\mathcal{U}$ that defines a time-dependent diffeomorphism $\psi_t:\mathcal{U}\times[0, 1]\to\mathcal{U}$ called the \emph{flow} via the \emph{flow ODE}:
\begin{equation}
    \partial_t \psi_t(u)=v_t(\psi_t(u)),\quad \psi_0(u)=u_0 .\label{eqn:flow}
\end{equation}

\begin{wrapfigure}{r}{0.55\textwidth} 
\vspace{-1.5em}
\begin{minipage}{0.55\textwidth}
\begin{algorithm}[H]
\caption{Sampling from FFM (Euler Method)}\label{alg:uncond_euler}
\begin{algorithmic}[1]
    \State \textbf{Input:} Learned vector field $v_\theta$, Euler steps $N$.
    \State Sample noise function $u_0\sim \mu_0(u)$.
    \For{$t\gets 0,1/N,2/N,\dots,(N-1)/N$}
        \State $u_{t+1/N}\gets u_t+v_\theta(u_t,t)/N$
    \EndFor
    \State \textbf{return} $u_1$
\end{algorithmic}
\end{algorithm}
\end{minipage}
\end{wrapfigure} 

The flow $\psi_t$ induces a pushforward measure $\hat{\mu}_t:=(\psi_t)_*\mu_0$, where $\mu_0$ is the prior noise measure from which $u_0$ can be sampled. 
FFM showed that a tractable flow matching objective can be derived when the flow is conditioned on the target function $u_1$ sampled from the target measure $\mu_1$.
As we are interested in guiding pre-trained generative models, we will assume that FFM can well approximate the target measure $\hat{\mu}_1\approx \mu_1$ over $\mathcal{U_{F}}$. We want to guide the pre-trained FFM towards the conditional measure $\mu_{\mathcal{F|G}}$ over $\mathcal{U_{F|G}}$. 
The sampling procedure of the FFM, similar to all diffusion and flow models, is an iterative process in which the initial noise function is iteratively refined into target functions via the learned vector field with the dynamics described in Equation~\ref{eqn:flow}. 
Algorithm~\ref{alg:uncond_euler} describes sampling from the FFM method using the Euler method. 
In practice, the vector field can be parameterized by some discretization-invariant NOs \citep{lu2019deeponet,li2020neural,li2020fourier} such that the whole generative framework is discretization-invariant. This indicates that FFM can be naturally adapted for zero-shot superresolution \citep{kerrigan2023functional}.

\subsection{Hard Constraint Induced Guidance}\label{sec:method}

Although various regression NOs have been proposed to satisfy specific constraints in their prediction \citep{liu2023inoinvariantneuraloperators, negiar2022learning, saad2022guiding,liu2024harnessingpowerneuraloperators, mouli2024usinguncertaintyquantificationcharacterize}, the difficulty in applying these existing methods to flow matching models lies in the iterative nature of the flow sampling in Algorithm~\ref{alg:uncond_euler}. For flow matching sampling, intermediate generations are noised data instead of final predictions, making the correction algorithm inapplicable as the constraint can only be applied to the last generation step.  

Following previous zero-shot guidance frameworks \citep{lugmayr2022repaint,kawar2022denoising}, at each iterative sampling timestep $t$, we model the constrained generation process with the conditional generation probability $p(\Hat{u}_t|\mathcal{G})=p(\Hat{u}_t|u_t,\mathcal{G})p(u_t)$, where the unconditional probability $p(u_t)$ can be modeled by a pre-trained generative model. 
We assume a correction algorithm $C({u}_1,\mathcal{G})$ is readily available for \emph{final predictions} that uses an orthogonal/oblique projection \citep{hansen2023learning} to ensure hard constraint satisfaction (see Appendix~\ref{sec:correction}). 
To offer iterative control over the sampling process for hard constraints, we note that if the final prediction can be \emph{extrapolated} from intermediate noised generations, the correction can be safely applied. Furthermore, if intermediate noised generations can be \emph{interpolated} back from the corrected prediction, we can ``propagate'' the constraint information back to each time step. Indeed, let $p_\theta(u_1|u_t)$ denote the probability of extrapolating the final prediction of $u_1$, given the current noise data $u_t$ and the learnable parameter $\theta$; and let $q(\Hat{u}_{t}|\Hat{u}_1)$ denote the probability of interpolating the intermediate noised data $\Hat{u}_{t}$, given the corrected data $\Hat{u}_1=C({u}_1,\mathcal{G})$; then the constraint-conditional probability can be decomposed in a variational way by marginalizing over the auxiliary variable $u_1$:
\begin{equation}
    p(\Hat{u}_t|u_t,\mathcal{G})
    =\mathbb{E}_{u_1\sim p_\theta(u_1|u_t)}[q(\Hat{u}_{t}|C({u}_1,\mathcal{G}))] .
    \label{eqn:eci}
\end{equation}

\begin{wrapfigure}{r}{0.48\textwidth} 
\vspace{-1.5em}
\begin{minipage}{0.48\textwidth}
\begin{algorithm}[H]
\caption{ECI Step}\label{alg:eci_step}
\begin{algorithmic}[1]
    \State \textbf{Input:} Learned vector field $v_\theta$, constraint $\mathcal{G}$, current noise data $u_t$, timestep $t'\ge t$.
    \State $u_1\gets u_t+(1-t)v_\theta$ 
    \Comment{Extrapolation}
    \State $\Hat{u}_1\gets C({u}_1,\mathcal{G})$ 
    \Comment{Correction}
    \State $u_{t'}\gets (1-t')u_0+t'\Hat{u}_1$ 
    \Comment{Interpolation}
    \State \textbf{return} $u_{t'}$
\end{algorithmic}
\end{algorithm}
\end{minipage}
\vspace{-0.5em}
\end{wrapfigure} 
We call such a constraint-guided step in each flow sampling step an \textbf{ECI Step} (for \emph{extrapolation-correction-interpolation}). With the deterministic flow ODE formulation, good properties can be naturally deduced with the flow-matching framework.
The \emph{extrapolation probability} $p_\theta(u_1|u_t)$ can be learned by the pre-trained unconditional model by doing a deterministic one-step extrapolation of the current predicted vector field as $u_1=u_t+(1-t)v_\theta(u_t)$.
The \emph{interpolation probability} $q(\Hat{u}_{t}|\Hat{u}_1)$ is also well-defined in the FFM along the \emph{OT-path} of measures \citep{kerrigan2023functional}, where the ground truth data are linearly interpolated with random noises as $\Hat{u}_{t}=(1-t)u_0+t\Hat{u}_1,u_0\sim \mu_0(u)$.
In this way, the expectation can be discarded, and the conditional probability can be further simplified as 
\begin{equation}
    p(\Hat{u}_t|u_t,\mathcal{G})=q(\Hat{u}_{t}|C(u_t+(1-t)v_\theta(u_t),\mathcal{G})) .
\end{equation}

\begin{wrapfigure}{r}{0.6\textwidth} 
\vspace{-1.5em}
\begin{minipage}{0.6\textwidth}
\begin{algorithm}[H]
\caption{ECI Sampling (Euler Method)}\label{alg:eci}
\begin{algorithmic}[1]
    \State \textbf{Input:} Learned vector field $v_\theta$, Euler steps $N$, mixing iterations $M$, constraint $\mathcal{G}$.
    \State Sample noise function $u_0\sim \mu_0(u)$.
    \For{$t\gets 0,1/N,2/N,\dots,(N-1)/N$}
        \State $u^{(0)}_t\gets u_t$
        \For{$m\gets 0,1,\dots,M-1$}
            \If{$m<M-1$}
                \State $u^{(m+1)}_t\gets \mathrm{ECI Step}(v_\theta,\mathcal{G},u^{(m)}_t,t)$
            \Else \Comment{Advance the flow ODE solver}
                \State $u_{t+1/N}\gets \mathrm{ECI Step}(v_\theta,\mathcal{G},u^{(m)}_t,t+1/N)$ 
            \EndIf
        \EndFor
    \EndFor
    \State \textbf{return} $u_1$
\end{algorithmic}
\end{algorithm}
\end{minipage}
\vspace{-1em}
\end{wrapfigure} 
We summarize the ECI step in Algorithm~\ref{alg:eci_step}, in which we also allow for $t'\ge t$ to advance the solver. 
Furthermore, we note that our ECI steps can be applied recursively to the same timestep in a manner similar to previous work that applied multiple rounds of guidance \citep{lugmayr2022repaint,ben2024d}. With the constraint information ``backpropagating'' into the noise data via one ECI step, a recursive application of such steps can promote further information mixing between the constrained and unconstrained regions, leading to a more consistent generation. 
At the last mixing step, we instead interpolate the new time step $t'>t$ with sample $u_{t'}$ to advance the ODE solver.
The number of total mixing steps $M$ is a controllable hyperparameter. 
Algorithm~\ref{alg:eci} describes the complete sampling process for flow matching models using iterative ECI steps, for which we term \textbf{ECI sampling}.
ECI guarantees exact satisfaction of the constraint and facilitates information mixing between the constrained and unconstrained regions to produce a more consistent generation. 
See Appendix~\ref{sec:sample_detail} for details on the sampling and proof of exact satisfaction.

\subsection{Control over Stochasticity} \label{sec:stochasticity}
ECI sampling provides a unified zero-shot and gradient-free guidance framework for various constraints. 
Nonetheless, the varieties in the constraint types may impose challenges. 
For example, if the constraint contains little information, the variance is expected to match the unconstrained case. On the other hand, with enough constraints to result in a well-posed PDE with a unique solution, the generation variance should be as small as possible (see Section~\ref{sec:regression}). 

Inspired by the work of the stochastic interpolant that unified flow matching and diffusion \citep{albergo2023stochastic}, we propose to control the generation stochasticity with different sampling strategies of $u_0$ during our ECI sampling. 
For flow-based models, stochasticity only appears in the sampling process of $u_0\sim\mu_0(u)$ for interpolating the new $\Hat{u}_t$. 
In the previous context, we assume the initially sampled $u_0$ at $t=0$ is used throughout the sampling process. This makes the trajectory straighter as the interpolation always starts with the same point. An alternative approach is to sample new noises on each step, effectively marginalizing over the prior noise measure $\mu_0$ and resulting in a more concentrated generation. We use a hyperparameter $R$ to denote the length of the interval for re-sampling the noise function $u_0$ for interpolation. Intuitively, a smaller $R$ leads to a more concentrated marginalized generation, whereas a larger $R$ leads to larger generational diversity. We quantitatively demonstrate the effect of this hyperparameter in Section~\ref{sec:ablation} and also propose heuristic rules for choosing it.

\subsection{Generative FFM as a Regression Model}\label{sec:regression}

We illustrate the potential of our ECI sampling framework for various regression tasks. 
Specifically, the solution set $\mathcal{U_{F|G}}$ defined in Section~\ref{subsec:prelim} will degenerate to a unique solution given enough constraints $\mathcal{G}$, leading to a well-posed PDE system. 
In this way, the generative task degenerates into a deterministic regression learning task, with the target distribution now shifted to the Dirac measure over the solution $\delta_{u_\text{sol}}$. 
Although our constrained generative framework is not designed for these regression tasks and has intrinsic stochasticity, in our empirical study, we show that our framework is able to generate competitive predictions to standard regression models, e.g., the Fourier Neural Operator (FNO) \citep{li2020fourier}.

\section{Empirical evaluation}
\label{sec:empirical_results}

In this section, we present an extensive evaluation of our ECI sampling and other existing zero-shot guidance models for both generative and regression tasks. 
Covering a wide range of diverse PDE systems, Table~\ref{tab:data} summarizes the dataset specifications used in our evaluation. 
See Appendix~\ref{sec:data} for more details regarding the PDE systems and the data generation process. 
For most zero-shot guidance methods, we first pre-train an unconditional flow matching model as the common prior generative model (See Appendix~\ref{sec:pretrain}).
We test our proposed method using more than one mixing iteration in most cases.
See Appendix~\ref{sec:setup} for the detailed experimental setup and baseline adaptations for flow sampling, and Appendix~\ref{sec:more_res} for additional visualizations and results. 

\begin{table}[hbt]
\centering
\caption{Dataset specifications (IC / BC / CL for initial condition / boundary condition / conservation law). For different resolutions, we test models with the zero-shot superresolution setting.} \label{tab:data}
\resizebox{\linewidth}{!}{%
\begin{tabular}{@{}llcccc@{}}
\toprule
Type & Dataset & Underspecification & Spatial Resolution & Temporal Resolution & Constraint \\ \midrule
\multirow{4}{*}{Generative} & Stokes Problem & IC \& BC & 100 & 100 & IC / BC \\
 & Heat Equation & IC \& diffusion & 100 & 100 & IC \\
 & Darcy Flow & BC \& field & 101 × 101 & N/A & BC \\
 & NS Equation & IC \& forcing & 64 × 64 & 50 & IC \\ \midrule
\multirow{5}{*}{Regression} & Heat Equation & diffusion & 100 / 200 & 100 / 200 & CL \\
 & PME & diffusion & 100 / 200 & 100 / 200 & CL \\
 & Stefan Problem & shock range & 100 / 200 & 100 / 200 & CL \\
 & Stokes Problem & IC \& BC & 100 & 100 & IC \& BC \\
 & NS Forward & IC \& forcing & 64 × 64 & 50 & first 15 frames \\ \bottomrule
\end{tabular}
}
\end{table}

\subsection{Generative Task}
In generative tasks, the solution set is non-degenerate with an infinite cardinality after imposing the constraint. 
We expect the generation distribution to match the ground truth shifted distribution. 
The prior FFM is pre-trained on a larger solution set, whereas during the zero-shot constrained generation, we impose a specific PDE constraint to narrow down the solution set. 
Additionally, we use a conditionally-trained FFM (CondFFM) that takes known constraints as conditions during both training and sampling as an upper bound for the zero-shot guidance models.
We conduct extensive experiments on various 2D and 3D PDE datasets with different types of constraints.

\begin{figure}[htb]
    \centering
    \includegraphics[width=\linewidth]{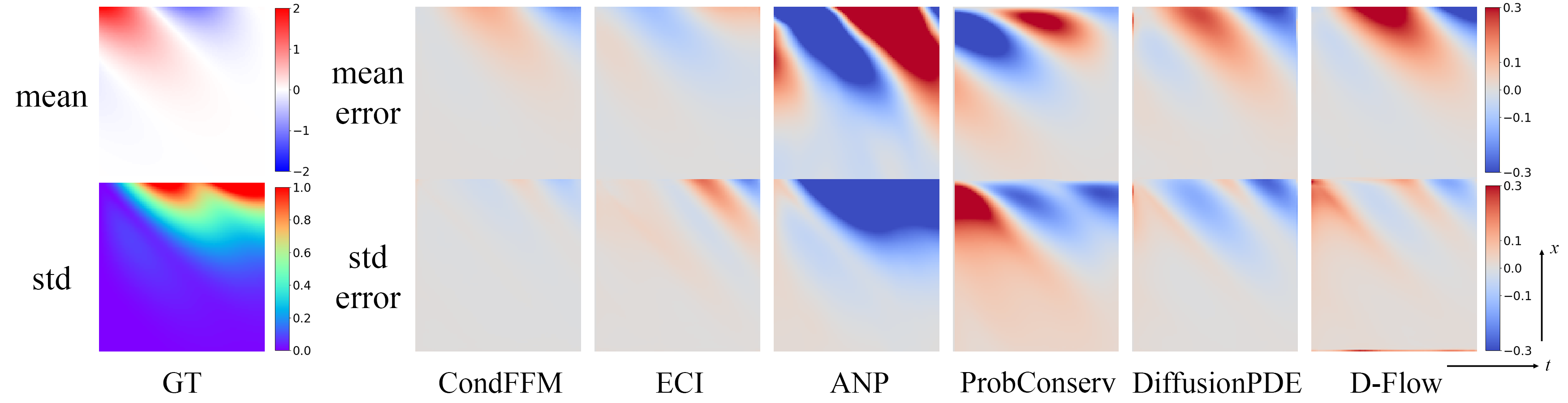}
    \vspace{-1.5em}
    \caption{Generation mean and standard deviation errors for the Stokes problem with IC fixed. Note that grey indicates $\approx 0$ error, red for positive error, and blue for negative error. Noticeable IC errors (left column) can be observed for the gradient-based methods.}
    \label{fig:stokes_ic_error}
\end{figure}

To compare two distributions over continuous functions, we follow \citet{kerrigan2023functional} to calculate the pointwise mean and standard deviation (std) of the ground truth solutions and the solutions generated from various sampling methods. Mean squared errors (MSEs) of the mean (\textbf{MMSE}) and of the std (\textbf{SMSE}) are then calculated to assess the resemblance in these statistics.
To evaluate the satisfaction of the constraint, the constraint error (\textbf{CE}) is defined as the MSE for the constraint operator $\text{CE}(u)=\text{MSE}(\mathcal{G}(u),0)$ on the constrained region $\mathcal{X_G}$. 
Inspired by the generative metric of Fréchet inception distance (FID) in image and PDE generation domains \citep{lim2023score}, we calculate the Fréchet distance between the hidden representations extracted by the pre-trained PDE foundation model Poseidon \citep{herde2024poseidon}. 
We call this metric Fréchet Poseidon distance (\textbf{FPD}). A lower FPD indicates a closer match to the ground truth distribution regarding the high-level representations. 
To alleviate the impact of randomness in sampling, we sample 512 solutions for the 2D datasets and 100 solutions for the 3D datasets for all methods to calculate the metrics.

We carry out experiments on 1D \textbf{Stokes problem}, 1D \textbf{heat equation}, 2D \textbf{Darcy flow}, and 2D \textbf{Navier-Stokes (NS) equation}. The detailed description of these PDE systems, as well as the set of PDE parameters for pre-training and constrained generation, are provided in Appendix~\ref{sec:data}. Note that we also ensure at least two degrees of freedom (underspecification) in generative tasks, as demonstrated in Table~\ref{tab:data}. This ensures that distribution over the constrained generations does not collapse to a Dirac distribution as in a regression task (see Figure~\ref{fig:setup}).

\begin{table}[ht]
\centering
\caption{Generative metrics on various constrained PDEs. The best results for zero-shot methods (CondFFM is not zero-shot) are highlighted in bold. The unconstrained pre-trained FFM is provided.} \label{tab:gen_res}
\resizebox{\linewidth}{!}{
\begin{tabular}{@{}llccccc|cc@{}}
\toprule
Dataset & Metric & ECI & ANP & ProbConserv & DiffusionPDE & D-Flow & CondFFM & FFM \\ \midrule
\multirow{4}{*}{Stokes IC} & MMSE / $10^{-2}$ & \textbf{0.090} & 13.750 & 2.014 & 0.601 & 1.015 & 0.121 & 2.068 \\
 & SMSE / $10^{-2}$ & \textbf{0.127} & 9.183 & 1.528 & 0.330 & 0.219 & 0.016 & 1.572 \\
 & CE / $10^{-2}$ & \textbf{0} & 3.711 & \textbf{0} & 0.721 & 1.021 & 0.017 & 9.792 \\
 & FPD & \textbf{0.076} & 26.685 & 13.075 & 0.202 & 0.614 & 0.028 & 13.342 \\ \midrule
\multirow{4}{*}{Stokes BC} & MMSE / $10^{-2}$ & \textbf{0.005} & 3.003 & 2.938 & 0.603 & 4.800 & 0.032 & 3.692 \\
 & SMSE / $10^{-2}$ & \textbf{0.003} & 3.055 & 2.302 & 0.341 & $>$100 & 0.047 & 3.320 \\
 & CE / $10^{-2}$ & \textbf{0} & 29.873 & \textbf{0} & 0.007 & 53.551 & 0.012 & $>$100 \\
 & FPD & \textbf{0.010} & 5.339 & 2.780 & 1.162 & 27.680 & 0.048 & 2.824 \\ \midrule
\multirow{4}{*}{Heat Equation} & MMSE / $10^{-2}$ & 1.605 & 5.127 & 3.744 & \textbf{0.979} & 1.174 & 0.008 & 3.954 \\
 & SMSE / $10^{-2}$ & \textbf{0.157} & 0.740 & 4.453 & 1.291 & 9.928 & 0.004 & 4.737 \\
 & CE / $10^{-2}$ & \textbf{0} & 0.183 & \textbf{0} & 11.646 & 94.551 & 0.002 & 49.296 \\
 & FPD & 1.365 & 2.432 & 2.639 & \textbf{0.951} & 3.831 & 0.006 & 2.799 \\ \midrule
\multirow{4}{*}{Darcy Flow} & MMSE / $10^{-2}$ & 2.314 & 14.647 & 56.608 & 10.442 & \textbf{1.728} & 0.043 & 58.879 \\
 & SMSE / $10^{-2}$ & \textbf{0.592} & 1.269 & 65.967 & 1.097 & 6.343 & 0.037 & 69.432 \\
 & CE / $10^{-2}$ & \textbf{0} & 3.152 & \textbf{0} & 0.656 & 14.553 & 0.071 & $>$10$^4$ \\
 & FPD & \textbf{0.946} & 5.805 & $>$100 & 8.027 & 10.043 & 0.021 & $>$100 \\ \midrule
\multirow{4}{*}{NS Equation} & MMSE / $10^{-2}$ & 7.961 & 59.020 & 18.635 & 19.141 & \textbf{2.846} & 0.851 & 18.749 \\
 & SMSE / $10^{-2}$ & \textbf{5.846} & 14.990 & 8.306 & 5.941 & 6.840 & 0.554 & 8.423 \\
 & CE / $10^{-2}$ & \textbf{0} & 5.527 & \textbf{0} & 2.295 & 0.776 & 0.079 & 11.451 \\
 & FPD & \textbf{1.131} & 57.263 & 2.171 & 5.750 & 1.651 & 0.232 & 2.185 \\ \bottomrule
\end{tabular}
}
\end{table}

\begin{figure}[htb]
    \centering
    \includegraphics[width=\linewidth]{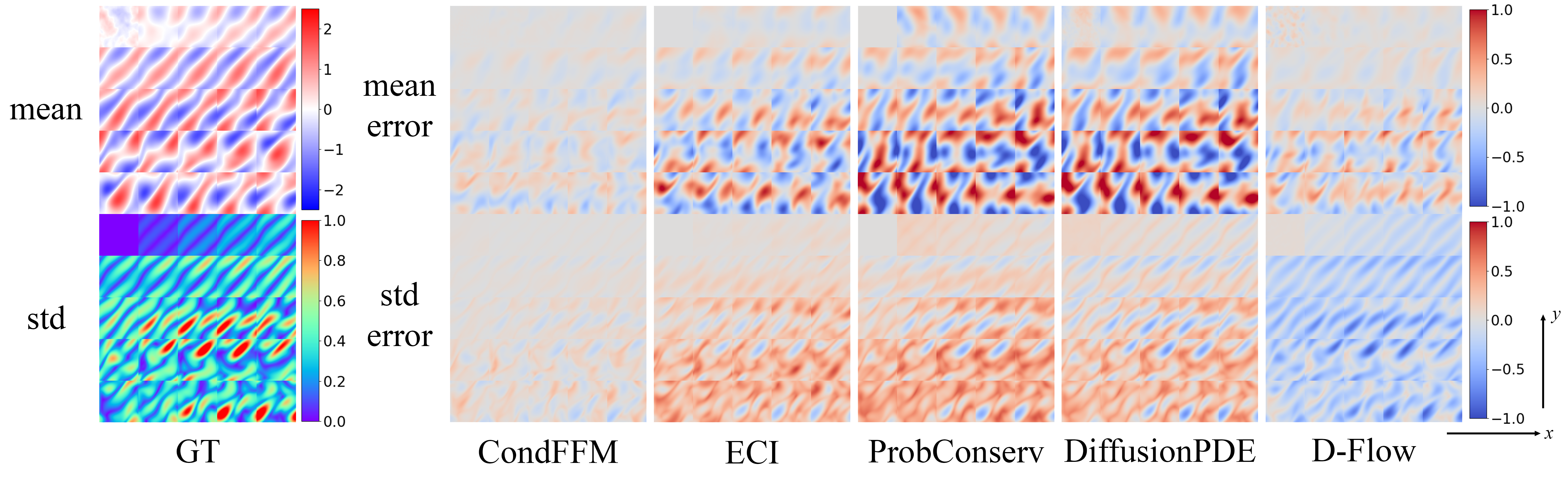}
    \vspace{-1.5em}
    \caption{Generation mean and standard deviation errors of the trajectories for the NS equation with IC fixed. 25 downsampled time frames are plotted for each method.}
    \label{fig:ns_error}
\end{figure}

Table~\ref{tab:gen_res} summarizes the results for the generative tasks with the best zero-shot method performance highlighted in bold. 
Figure~\ref{fig:stokes_ic_error} shows the visualization of pointwise errors for the generation statistics for the Stokes problem with IC fixed together with the ground truth reference. Figure~\ref{fig:ns_error} shows the visualization for the NS equation with 25 downsampled time frames.
Our proposed ECI sampling achieves state-of-the-art performance on most metrics and also enjoys zero constraint errors. 
Noticeably, ECI sampling excels in capturing the higher-order distributional properties of the shifted conditional measure, e.g., the standard deviation and FPD, and even surpasses the conditional FFM in several simple 2D cases.
Specifically, we observe noticeable artifacts around the IC for the gradient-based methods, i.e., DiffusionPDE and D-Flow, as they do not guarantee the exact satisfaction of the constraint. 
For ProbConserv~\citep{hansen2023learning}, although the constraint is satisfied exactly, the naive non-iterative correction approach leaves a sudden change around the boundary that violates the physical feasibility.
In addition, Table~\ref{tab:time} shows that our gradient-free approach enjoys sampling efficiency in both 2D and 3D cases compared with gradient-based methods. Noticeably, ECI-1 achieved $\times$440 acceleration and $\times$310 memory saving compared to gradient-based D-Flow on the Stokes problem. More visualizations for the other PDE systems and the different generation statistics are provided in Appendix~\ref{sec:more_res_gen}.

\begin{table}[ht]
\centering
\caption{Generation setup (batch size vs Euler step), time per sample in seconds, and GPU memory for 2D (Stokes IC) and 3D (NS equation) PDE systems. A fraction sample size is available for ANP.}
\label{tab:time}
\resizebox{\linewidth}{!}{
\begin{tabular}{@{}llccccccc@{}}
\toprule
Dataset & Resource & ECI-1 & ECI-5 & CondFFM & ANP & ProbConserv & DiffusionPDE & D-Flow \\ \midrule
\multirow{3}{*}{Stokes IC} & \#sample/\#Euler & 128/200 & 128/200 & 128/200 & 32/NA & 128/200 & 128/200 & 2/200 \\
 & Time/sample/s & 0.065 & 0.325 & 0.057 & 0.009 & 0.058 & 0.131 & 28.774 \\
 & GPU Memory/GB & 5.4 & 5.4 & 5.4 & 7.4 & 5.4 & 10.8 & 26.4 \\ \midrule
\multirow{3}{*}{NS equation} & \#sample/\#Euler & 25/100 & 25/100 & 25/100 & 0.2/NA & 25/100 & 25/100 & 1/20 \\
 & Time/sample/s & 0.415 & 2.067 & 0.669 & 2.324 & 0.675 & 0.676 & 8.456 \\
 & GPU Memory/GB & 16.3 & 16.3 & 16.3 & 11.0 & 16.3 & 27.0 & 27.1 \\ \bottomrule
\end{tabular}
}
\vspace{-0.5em}
\end{table}

\subsection{Regression Task}
We additionally experiment with regression scenarios (see Section~\ref{sec:regression}), where enough constraints reduce the generative task into the standard regression setting in neural operator learning. As the ground truth targets are available in this case, we calculate the MSE and other related evaluation metrics and compare them with traditional NOs.

\begin{table}[ht]
\centering
\caption{Uncertainty quantification metrics on constrained PDEs from the Generalized Porous Medium Equation (GPME) family of equations. All baseline results are taken from \citet{hansen2023learning}. The best results are highlighted in bold.
} \label{tab:uq}
\resizebox{0.9\linewidth}{!}{
\begin{tabular}{@{}llccccc@{}}
\toprule
Dataset & Metric & ECI & ANP & SoftC-ANP & HardC-ANP & ProbConserv \\ \midrule
\multirow{3}{*}{Heat Equation} & CE / $10^{-3}$ & \textbf{0} & 4.68 & 3.47 & \textbf{0} & \textbf{0} \\
 & LL & 1.90 & 2.72 & 2.40 & \textbf{3.08} & 2.74 \\
 & MSE / $10^{-4}$ & \textbf{0.81} & 1.71 & 2.24 & 1.37 & 1.55 \\ \midrule
\multirow{3}{*}{PME} & CE / $10^{-3}$ & \textbf{0} & 6.67 & 5.62 & \textbf{0} & \textbf{0} \\
 & LL & 2.19 & 3.49 & 3.11 & 3.16 & \textbf{3.56} \\
 & MSE / $10^{-4}$ & 0.19 & 0.94 & 1.11 & 0.43 & \textbf{0.17} \\ \midrule
\multirow{3}{*}{Stefan Problem} & CE / $10^{-2}$ & \textbf{0} & 1.30 & 1.72 & \textbf{0} & \textbf{0} \\
 & LL & 3.30 & 3.53 & \textbf{3.57} & 2.33 & 3.56 \\
 & MSE / $10^{-3}$ & \textbf{1.89} & 5.38 & 6.81 & 5.18 & \textbf{1.89} \\ \bottomrule
\end{tabular}
}
\end{table}
\begin{table}[ht]
\centering
\caption{Neural operator learning metrics on two constrained PDEs. We abbreviate OOM for out-of-memory. The results for the best generative models are highlighted in bold.} \label{tab:no}
\resizebox{\linewidth}{!}{
\begin{tabular}{@{}llccccc|c@{}}
\toprule
Dataset & Metric & ECI & ANP & ProbConserv & DiffusionPDE & D-Flow & FNO \\ \midrule
\multirow{2}{*}{Stokes Problem} & MMSE / $10^{-3}$ & \textbf{0.050} & 16.470 & 103.136 & 5.839 & 89.514 & 0.033 \\
 & SMSE / $10^{-2}$ & \textbf{0.028} & 0.033 & 7.509 & 0.473 & 136.936 & 0 \\ \midrule
\multirow{2}{*}{NS Equation} & MMSE / $10^{-2}$ & \textbf{0.069} & OOM & 46.673 & 19.015 & 1.310 & 0.380 \\
 & SMSE / $10^{-2}$ & \textbf{0.002} & OOM & 28.789 & 13.914 & 4.068 & 0 \\ \bottomrule
\end{tabular}
}
\end{table}

\paragraph{Uncertainty Quantification.}
In the uncertainty quantification task, random context points from the true solution are sampled together with the conservation laws to pin down the unique solution.
Following \citet{hansen2023learning}, we experiment with our ECI sampling on the Generalized Porous Medium Equation (GPME) family of equations using identical PDE parameter choices. We evaluate the generation results on three instances of the GPME, i.e., the heat equation, porous medium equation (PME), and Stefan problem, where the additional constraints are various physical conservation laws. We also follow the paper to report the pointwise Gaussian log-likelihood (LL) based on the sample mean and variance. The results in Table~\ref{tab:uq} show that our ECI sampling is able to achieve comparable results with ProbConserv \citep{hansen2023learning}, which is specifically tailored for these uncertainty quantification tasks. We also notice that, though being a generative model with intrinsic stochasticity from the prior noise distribution, our method is able to produce quite confident predictions with little variance. It surpasses the other baselines in terms of MSE but at the cost of worse LL. See Appendix~\ref{sec:more_res_reg} for visualizations and more discussions.

\paragraph{Neural Operator Learning.}
We further experiment with the zero-shot neural operator learning task on the Stokes problem and Navier-Stokes equation. For the Stokes problem, both BC and IC are prescribed to give a unique solution. For the NS equation, we follow \citet{li2020fourier} to consider the regression task from the first 15 frames to the other 35 frames. In addition, a standard FNO \citep{li2020fourier} was trained as the regression baseline.
Table~\ref{tab:no} summarizes the results. The large amount of context pixels in the NS equation makes it infeasible for the ANP model. Our ECI sampling significantly outperforms gradient-based methods and reaches comparable or even better MSEs than the NOs directly trained on these regression tasks. Similar to the uncertainty quantification task, we observe that our ECI sampling is quite confident about its prediction with little variance. We also provide a comparison of ECI sampling with different numbers of frames fixed as the constraint in Appendix~\ref{sec:more_res_abl} to demonstrate how the variance gradually reduces with more constraint information prescribed.

\subsection{Ablation Study}
\label{sec:ablation}

Similar to most existing controlled generation methods for diffusion or flow models, the number of mixing iterations is a tunable hyperparameter for our ECI sampling to control the extent of information mixing.
Intuitively, an insufficient number of mixing iterations may fail to mix the information between the constrained and the unconstrained regions, leading to more artifacts. On the other hand, an excessively large number of mixing iterations may discard the prior PDE structure learned in the pre-trained generative model, leading to inconsistency with the PDE system.
We further test with the length of noise re-sampling interval $R$, as discussed in Section~\ref{sec:stochasticity}, to control stochasticity for different constrained generation tasks.

We test the combination of mixing iterations $M \in \{1, 2, 5, 10, 20\}$, with a re-sampling interval $R \in \{ 1, 2, 5, 10\}$, and no re-sampling settings on the Stokes problem with the IC or BC prescribed. 
Figure~\ref{fig:ablation} illustrates the results of different ECI sampling settings together with other baselines. 
For the IC task, $R=\text{None}$ performs the best, whereas for the BC task, $R=\text{None}$ performs worse, with $R=5$ and $R=2$ achieving lower MSEs. In both cases, the MSEs do not decrease monotonically with increasing mixing iterations. 
Therefore, the mixing iteration impacts the final performance in a task-specific manner. We hypothesize that the difference in the shifted variance may cause this difference in the performance after imposing a specific constraint. 
For the Stokes problem, the prior ground truth variance is higher for the BC than for the IC. Prescribing BC values provides more information and reduces the uncertainty, making it easier for the model to be guided. In contrast, prescribing IC values provides less information, which results in a larger variance in the shifted distribution and makes it harder to guide the model. 

Inspired by the aforementioned observations, we propose the following heuristic rules for choosing the appropriate number of mixing iterations $M$ and re-sample interval $R$:
1) We usually limit $M$ between 1-10, as a larger number of iterations is both computation-intensive and leads to worse performance; and
2) For easier tasks with lower variance, we choose a smaller $M$ and smaller $R$; and for harder tasks with higher variance, we choose a larger $M$ and larger $R$ or no re-sampling. 
Indeed, we use $1$ mixing iteration for the relatively easy tasks on the heat equation and Darcy flow and use $10$ mixing iterations for the harder NS equation.

\begin{figure}[ht]
    \centering
    \begin{minipage}{0.5\textwidth}
        \centering
        \includegraphics[width=0.9\linewidth]{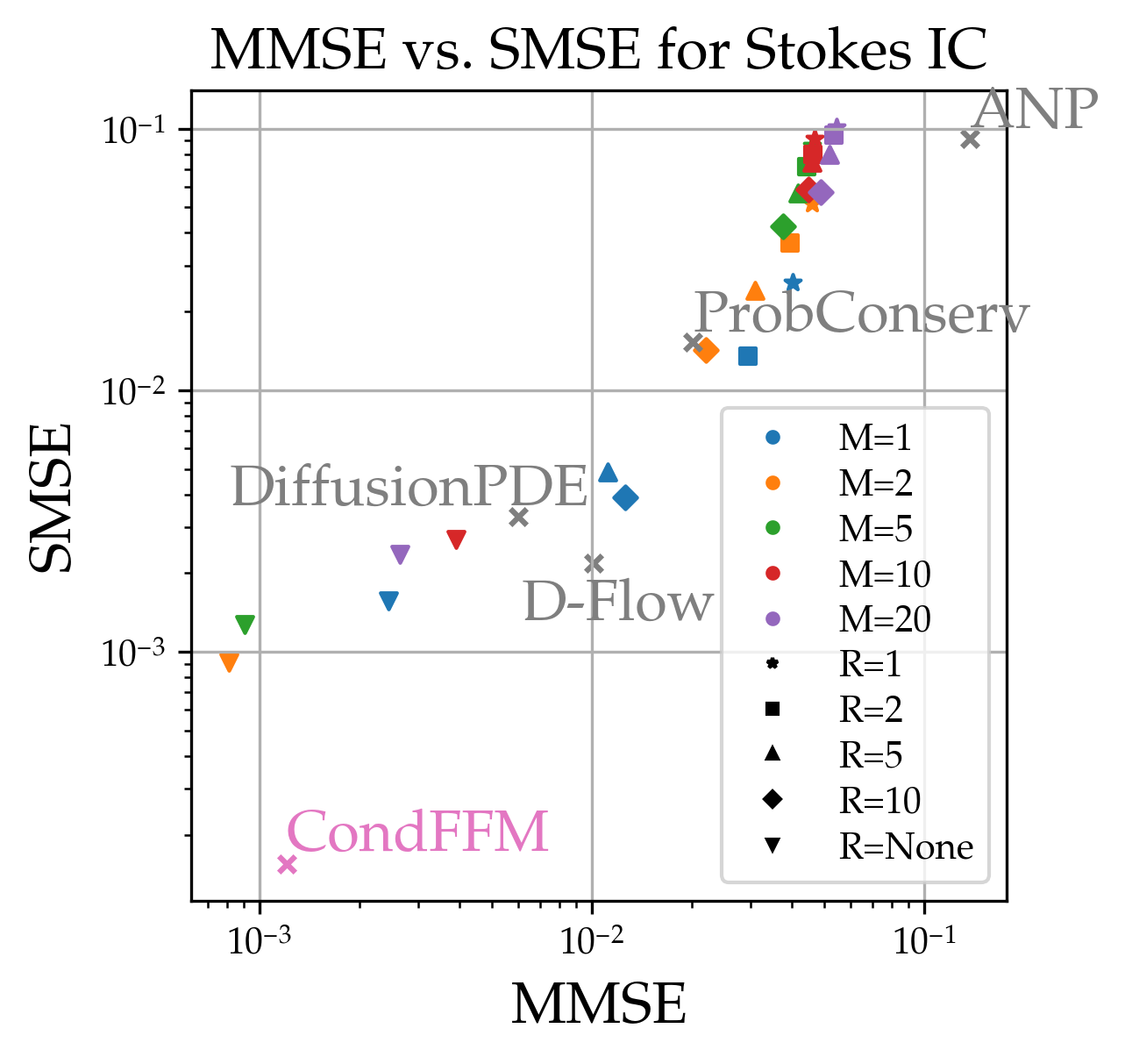}
    \end{minipage}%
    \begin{minipage}{0.5\textwidth}
        \centering
        \includegraphics[width=0.9\linewidth]{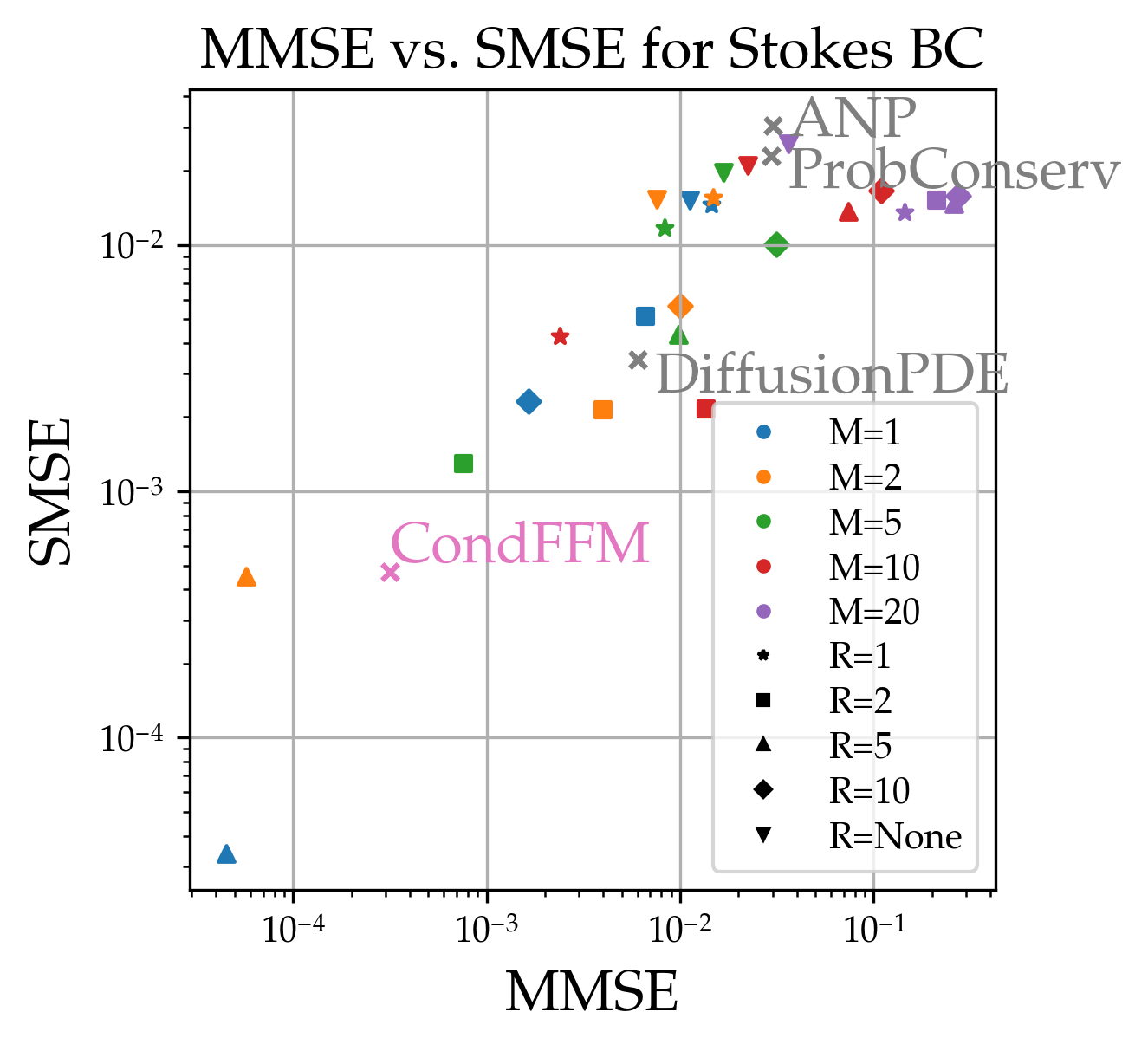}
    \end{minipage}
    \vspace{-1em}
    \caption{MMSE and SMSE of ECI sampling with mixing iterations $M$ in different colors and re-sampling intervals $R$ in different markers in the Stokes problem with IC (left) or BC (right) fixed. ``None'' for no re-sampling (the initial noise is used).}
    \label{fig:ablation}
\vspace{-0.2cm}
\end{figure}

\section{Conclusion}
In this work, we present ECI sampling as a unified zero-shot and gradient-free framework for guiding pre-trained flow-matching models towards hard constraints. 
Instantiated on PDE systems, our framework obtains superior sampling quality and efficient sampling time compared to existing zero-shot methods, and it also enjoys the additional benefit of a zero-shot regression model with comparable performance to the traditional regression models.
We also highlight the potential of ECI sampling on other domains outside of SciML, including supply chain and time series, which we leave as future work.
We further note a limitation of our framework: the number of mixing iterations and the re-sampling interval have a task-specific impact on the final performance. 
While we have provided empirical evaluations of these hyperparameters and suggested heuristic guidelines for selecting them, theoretical analyses would be beneficial.

\bibliography{ref}

\begin{thebibliography}{52}
\providecommand{\natexlab}[1]{#1}
\providecommand{\url}[1]{\texttt{#1}}
\expandafter\ifx\csname urlstyle\endcsname\relax
  \providecommand{\doi}[1]{doi: #1}\else
  \providecommand{\doi}{doi: \begingroup \urlstyle{rm}\Url}\fi

\bibitem[Albergo et~al.(2023)Albergo, Boffi, and Vanden-Eijnden]{albergo2023stochastic}
Michael~S Albergo, Nicholas~M Boffi, and Eric Vanden-Eijnden.
\newblock Stochastic interpolants: A unifying framework for flows and diffusions.
\newblock \emph{arXiv preprint arXiv:2303.08797}, 2023.

\bibitem[Ansari et~al.(2024)Ansari, Stella, Turkmen, Zhang, Mercado, Shen, Shchur, Rangapuram, Pineda~Arango, Kapoor, Zschiegner, Maddix, Wang, Mahoney, Torkkola, Gordon~Wilson, Bohlke-Schneider, and Wang]{ansari2024chronos}
Abdul~Fatir Ansari, Lorenzo Stella, Caner Turkmen, Xiyuan Zhang, Pedro Mercado, Huibin Shen, Oleksandr Shchur, Syama~Syndar Rangapuram, Sebastian Pineda~Arango, Shubham Kapoor, Jasper Zschiegner, Danielle~C. Maddix, Hao Wang, Michael~W. Mahoney, Kari Torkkola, Andrew Gordon~Wilson, Michael Bohlke-Schneider, and Yuyang Wang.
\newblock Chronos: Learning the language of time series.
\newblock \emph{arXiv preprint arXiv:2403.07815}, 2024.

\bibitem[Bai et~al.(2020)Bai, Chen, Chen, and Guo]{bai2020deep}
Yanna Bai, Wei Chen, Jie Chen, and Weisi Guo.
\newblock Deep learning methods for solving linear inverse problems: Research directions and paradigms.
\newblock \emph{Signal Processing}, 177:\penalty0 107729, 2020.

\bibitem[Ben-Hamu et~al.(2024)Ben-Hamu, Puny, Gat, Karrer, Singer, and Lipman]{ben2024d}
Heli Ben-Hamu, Omri Puny, Itai Gat, Brian Karrer, Uriel Singer, and Yaron Lipman.
\newblock D-flow: Differentiating through flows for controlled generation.
\newblock \emph{arXiv preprint arXiv:2402.14017}, 2024.

\bibitem[Chen \& Lipman(2023)Chen and Lipman]{chen2023riemannian}
Ricky~TQ Chen and Yaron Lipman.
\newblock Riemannian flow matching on general geometries.
\newblock \emph{arXiv preprint arXiv:2302.03660}, 2023.

\bibitem[Chen et~al.(2018)Chen, Rubanova, Bettencourt, and Duvenaud]{chen2018neural}
Ricky~TQ Chen, Yulia Rubanova, Jesse Bettencourt, and David~K Duvenaud.
\newblock Neural ordinary differential equations.
\newblock \emph{Advances in neural information processing systems}, 31, 2018.

\bibitem[Chung et~al.(2022)Chung, Kim, Mccann, Klasky, and Ye]{chung2022diffusion}
Hyungjin Chung, Jeongsol Kim, Michael~T Mccann, Marc~L Klasky, and Jong~Chul Ye.
\newblock Diffusion posterior sampling for general noisy inverse problems.
\newblock \emph{arXiv preprint arXiv:2209.14687}, 2022.

\bibitem[Dormand \& Prince(1980)Dormand and Prince]{DORMAND198019}
J.R. Dormand and P.J. Prince.
\newblock A family of embedded runge-kutta formulae.
\newblock \emph{Journal of Computational and Applied Mathematics}, 6\penalty0 (1):\penalty0 19--26, 1980.
\newblock ISSN 0377-0427.

\bibitem[Esser et~al.(2024{\natexlab{a}})Esser, Kulal, Blattmann, Entezari, M{\"u}ller, Saini, Levi, Lorenz, Sauer, Boesel, et~al.]{esser2024scaling}
Patrick Esser, Sumith Kulal, Andreas Blattmann, Rahim Entezari, Jonas M{\"u}ller, Harry Saini, Yam Levi, Dominik Lorenz, Axel Sauer, Frederic Boesel, et~al.
\newblock Scaling rectified flow transformers for high-resolution image synthesis.
\newblock In \emph{Forty-first International Conference on Machine Learning}, 2024{\natexlab{a}}.

\bibitem[Esser et~al.(2024{\natexlab{b}})Esser, Kulal, Blattmann, Entezari, Müller, Saini, Levi, Lorenz, Sauer, Boesel, Podell, Dockhorn, English, Lacey, Goodwin, Marek, and Rombach]{esser2024scalingrectifiedflowtransformers}
Patrick Esser, Sumith Kulal, Andreas Blattmann, Rahim Entezari, Jonas Müller, Harry Saini, Yam Levi, Dominik Lorenz, Axel Sauer, Frederic Boesel, Dustin Podell, Tim Dockhorn, Zion English, Kyle Lacey, Alex Goodwin, Yannik Marek, and Robin Rombach.
\newblock Scaling rectified flow transformers for high-resolution image synthesis, 2024{\natexlab{b}}.
\newblock URL \url{https://arxiv.org/abs/2403.03206}.

\bibitem[Garnelo et~al.(2018)Garnelo, Schwarz, Rosenbaum, Viola, Rezende, Eslami, and Teh]{garnelo2018neural}
Marta Garnelo, Jonathan Schwarz, Dan Rosenbaum, Fabio Viola, Danilo~J Rezende, SM~Eslami, and Yee~Whye Teh.
\newblock Neural processes.
\newblock \emph{arXiv preprint arXiv:1807.01622}, 2018.

\bibitem[Gat et~al.(2024)Gat, Remez, Shaul, Kreuk, Chen, Synnaeve, Adi, and Lipman]{gat2024discrete}
Itai Gat, Tal Remez, Neta Shaul, Felix Kreuk, Ricky~TQ Chen, Gabriel Synnaeve, Yossi Adi, and Yaron Lipman.
\newblock Discrete flow matching.
\newblock \emph{arXiv preprint arXiv:2407.15595}, 2024.

\bibitem[Hansen et~al.(2023)Hansen, Maddix, Alizadeh, Gupta, and Mahoney]{hansen2023learning}
Derek Hansen, Danielle~C Maddix, Shima Alizadeh, Gaurav Gupta, and Michael~W Mahoney.
\newblock Learning physical models that can respect conservation laws.
\newblock In \emph{International Conference on Machine Learning}, pp.\  12469--12510. PMLR, 2023.

\bibitem[Herde et~al.(2024)Herde, Raoni{\'c}, Rohner, K{\"a}ppeli, Molinaro, de~B{\'e}zenac, and Mishra]{herde2024poseidon}
Maximilian Herde, Bogdan Raoni{\'c}, Tobias Rohner, Roger K{\"a}ppeli, Roberto Molinaro, Emmanuel de~B{\'e}zenac, and Siddhartha Mishra.
\newblock Poseidon: Efficient foundation models for pdes.
\newblock \emph{arXiv preprint arXiv:2405.19101}, 2024.

\bibitem[Ho et~al.(2020)Ho, Jain, and Abbeel]{ho2020denoising}
Jonathan Ho, Ajay Jain, and Pieter Abbeel.
\newblock Denoising diffusion probabilistic models.
\newblock In \emph{Advances in {{Neural Information Processing Systems}}}, volume~33, pp.\  6840--6851, 2020.

\bibitem[Huang et~al.(2024)Huang, Yang, Wang, and Park]{huang2024diffusionpde}
Jiahe Huang, Guandao Yang, Zichen Wang, and Jeong~Joon Park.
\newblock {Diffusion{PDE}}: Generative {PDE}-solving under partial observation.
\newblock \emph{arXiv preprint arXiv:2406.17763}, 2024.

\bibitem[Kawar et~al.(2022)Kawar, Elad, Ermon, and Song]{kawar2022denoising}
Bahjat Kawar, Michael Elad, Stefano Ermon, and Jiaming Song.
\newblock Denoising diffusion restoration models.
\newblock volume~35, pp.\  23593--23606, 2022.

\bibitem[Kerrigan et~al.(2023)Kerrigan, Migliorini, and Smyth]{kerrigan2023functional}
Gavin Kerrigan, Giosue Migliorini, and Padhraic Smyth.
\newblock Functional flow matching.
\newblock \emph{arXiv preprint arXiv:2305.17209}, 2023.

\bibitem[Kim et~al.(2019)Kim, Mnih, Schwarz, Garnelo, Eslami, Rosenbaum, Vinyals, and Teh]{kim2019attentive}
Hyunjik Kim, Andriy Mnih, Jonathan Schwarz, Marta Garnelo, Ali Eslami, Dan Rosenbaum, Oriol Vinyals, and Yee~Whye Teh.
\newblock Attentive neural processes.
\newblock \emph{arXiv preprint arXiv:1901.05761}, 2019.

\bibitem[Kollovieh et~al.(2024)Kollovieh, Ansari, Bohlke-Schneider, Zschiegner, Wang, and Wang]{kollovieh2024predict}
Marcel Kollovieh, Abdul~Fatir Ansari, Michael Bohlke-Schneider, Jasper Zschiegner, Hao Wang, and Yuyang~Bernie Wang.
\newblock Predict, refine, synthesize: Self-guiding diffusion models for probabilistic time series forecasting.
\newblock \emph{Advances in Neural Information Processing Systems}, 36, 2024.

\bibitem[Krishnapriyan et~al.(2021)Krishnapriyan, Gholami, Zhe, Kirby, and Mahoney]{krishnapriyan2021characterizing}
Aditi Krishnapriyan, Amir Gholami, Shandian Zhe, Robert Kirby, and Michael~W Mahoney.
\newblock Characterizing possible failure modes in physics-informed neural networks.
\newblock In \emph{Advances in {{Neural Information Processing Systems}}}, volume~34, pp.\  26548--26560, 2021.

\bibitem[LeVeque(1990)]{leveque1990numerical}
Randall~J. LeVeque.
\newblock \emph{Numerical Methods for Conservation Laws}.
\newblock Lectures in mathematics ETH Z{\"u}rich. Birkh{\"a}user Verlag, 1990.

\bibitem[Li et~al.(2020{\natexlab{a}})Li, Kovachki, Azizzadenesheli, Liu, Bhattacharya, Stuart, and Anandkumar]{li2020fourier}
Zongyi Li, Nikola Kovachki, Kamyar Azizzadenesheli, Burigede Liu, Kaushik Bhattacharya, Andrew Stuart, and Anima Anandkumar.
\newblock Fourier neural operator for parametric partial differential equations.
\newblock \emph{arXiv preprint arXiv:2010.08895}, 2020{\natexlab{a}}.

\bibitem[Li et~al.(2020{\natexlab{b}})Li, Kovachki, Azizzadenesheli, Liu, Bhattacharya, Stuart, and Anandkumar]{li2020neural}
Zongyi Li, Nikola Kovachki, Kamyar Azizzadenesheli, Burigede Liu, Kaushik Bhattacharya, Andrew Stuart, and Anima Anandkumar.
\newblock Neural operator: Graph kernel network for partial differential equations.
\newblock \emph{arXiv preprint arXiv:2003.03485}, 2020{\natexlab{b}}.

\bibitem[Li et~al.(2024)Li, Zheng, Kovachki, Jin, Chen, Liu, Azizzadenesheli, and Anandkumar]{li2024physics}
Zongyi Li, Hongkai Zheng, Nikola Kovachki, David Jin, Haoxuan Chen, Burigede Liu, Kamyar Azizzadenesheli, and Anima Anandkumar.
\newblock Physics-informed neural operator for learning partial differential equations.
\newblock \emph{ACM/JMS Journal of Data Science}, 1\penalty0 (3):\penalty0 1--27, 2024.

\bibitem[Lim et~al.(2023)Lim, Kovachki, Baptista, Beckham, Azizzadenesheli, Kossaifi, Voleti, Song, Kreis, Kautz, et~al.]{lim2023score}
Jae~Hyun Lim, Nikola~B Kovachki, Ricardo Baptista, Christopher Beckham, Kamyar Azizzadenesheli, Jean Kossaifi, Vikram Voleti, Jiaming Song, Karsten Kreis, Jan Kautz, et~al.
\newblock Score-based diffusion models in function space.
\newblock \emph{arXiv preprint arXiv:2302.07400}, 2023.

\bibitem[Lin et~al.(2024)Lin, Li, Li, Li, and Gao]{lin2024diffusion}
Lequan Lin, Zhengkun Li, Ruikun Li, Xuliang Li, and Junbin Gao.
\newblock Diffusion models for time-series applications: a survey.
\newblock \emph{Frontiers of Information Technology \& Electronic Engineering}, 25\penalty0 (1):\penalty0 19--41, 2024.

\bibitem[Lipman et~al.(2022)Lipman, Chen, Ben-Hamu, Nickel, and Le]{lipman2022flow}
Yaron Lipman, Ricky~TQ Chen, Heli Ben-Hamu, Maximilian Nickel, and Matt Le.
\newblock Flow matching for generative modeling.
\newblock \emph{arXiv preprint arXiv:2210.02747}, 2022.

\bibitem[Lippe et~al.(2024)Lippe, Veeling, Perdikaris, Turner, and Brandstetter]{lippe2024pde}
Phillip Lippe, Bas Veeling, Paris Perdikaris, Richard Turner, and Johannes Brandstetter.
\newblock Pde-refiner: Achieving accurate long rollouts with neural pde solvers.
\newblock In \emph{Advances in {{Neural Information Processing Systems}}}, volume~37, 2024.

\bibitem[Liu et~al.(2023{\natexlab{a}})Liu, Yu, You, and Tatikola]{liu2023inoinvariantneuraloperators}
Ning Liu, Yue Yu, Huaiqian You, and Neeraj Tatikola.
\newblock {INO}: Invariant neural operators for learning complex physical systems with momentum conservation.
\newblock In \emph{Proceedings of The 26th International Conference on Artificial Intelligence and Statistics}, volume 206, pp.\  6822--6838. PMLR, 2023{\natexlab{a}}.

\bibitem[Liu et~al.(2024)Liu, Fan, Zeng, Klöwer, Zhang, and Yu]{liu2024harnessingpowerneuraloperators}
Ning Liu, Yiming Fan, Xianyi Zeng, Milan Klöwer, Lu~Zhang, and Yue Yu.
\newblock Harnessing the power of neural operators with automatically encoded conservation laws.
\newblock In \emph{International conference on machine learning}. PMLR, 2024.

\bibitem[Liu et~al.(2022)Liu, Gong, and Liu]{liu2022flow}
Xingchao Liu, Chengyue Gong, and Qiang Liu.
\newblock Flow straight and fast: Learning to generate and transfer data with rectified flow.
\newblock \emph{arXiv preprint arXiv:2209.03003}, 2022.

\bibitem[Liu et~al.(2023{\natexlab{b}})Liu, Wu, Zhang, Gong, Ping, and Liu]{liu2023flowgrad}
Xingchao Liu, Lemeng Wu, Shujian Zhang, Chengyue Gong, Wei Ping, and Qiang Liu.
\newblock Flowgrad: Controlling the output of generative odes with gradients.
\newblock In \emph{Proceedings of the IEEE/CVF Conference on Computer Vision and Pattern Recognition}, pp.\  24335--24344, 2023{\natexlab{b}}.

\bibitem[Lou et~al.(2023)Lou, Meng, and Ermon]{lou2023discrete}
Aaron Lou, Chenlin Meng, and Stefano Ermon.
\newblock Discrete diffusion language modeling by estimating the ratios of the data distribution.
\newblock \emph{arXiv preprint arXiv:2310.16834}, 2023.

\bibitem[Lu et~al.(2019)Lu, Jin, and Karniadakis]{lu2019deeponet}
Lu~Lu, Pengzhan Jin, and George~Em Karniadakis.
\newblock Deeponet: Learning nonlinear operators for identifying differential equations based on the universal approximation theorem of operators.
\newblock \emph{arXiv preprint arXiv:1910.03193}, 2019.

\bibitem[Lugmayr et~al.(2022)Lugmayr, Danelljan, Romero, Yu, Timofte, and Van~Gool]{lugmayr2022repaint}
Andreas Lugmayr, Martin Danelljan, Andres Romero, Fisher Yu, Radu Timofte, and Luc Van~Gool.
\newblock Repaint: Inpainting using denoising diffusion probabilistic models.
\newblock In \emph{Proceedings of the IEEE/CVF conference on computer vision and pattern recognition}, pp.\  11461--11471, 2022.

\bibitem[Mouli et~al.(2024)Mouli, Maddix, Alizadeh, Gupta, Stuart, Mahoney, and Wang]{mouli2024usinguncertaintyquantificationcharacterize}
S.~Chandra Mouli, Danielle~C. Maddix, Shima Alizadeh, Gaurav Gupta, Andrew Stuart, Michael~W. Mahoney, and Yuyang Wang.
\newblock Using uncertainty quantification to characterize and improve out-of-domain learning for pdes.
\newblock In \emph{International Conference on Machine Learning}. PMLR, 2024.

\bibitem[Négiar et~al.(2023)Négiar, Mahoney, and Krishnapriyan]{negiar2022learning}
Geoffrey Négiar, Michael~W. Mahoney, and Aditi~S. Krishnapriyan.
\newblock Learning differentiable solvers for systems with hard constraints.
\newblock In \emph{International Conference on Learning Representations}, 2023.

\bibitem[Pan et~al.(2023{\natexlab{a}})Pan, Liew, Tan, Feng, and Yan]{pan2023adjointdpm}
Jiachun Pan, Jun~Hao Liew, Vincent~YF Tan, Jiashi Feng, and Hanshu Yan.
\newblock Adjointdpm: Adjoint sensitivity method for gradient backpropagation of diffusion probabilistic models.
\newblock \emph{arXiv preprint arXiv:2307.10711}, 2023{\natexlab{a}}.

\bibitem[Pan et~al.(2023{\natexlab{b}})Pan, Yan, Liew, Feng, and Tan]{pan2023towards}
Jiachun Pan, Hanshu Yan, Jun~Hao Liew, Jiashi Feng, and Vincent~YF Tan.
\newblock Towards accurate guided diffusion sampling through symplectic adjoint method.
\newblock \emph{arXiv preprint arXiv:2312.12030}, 2023{\natexlab{b}}.

\bibitem[Raissi et~al.(2019)Raissi, Perdikaris, and Karniadakis]{RAISSI2019686}
M.~Raissi, P.~Perdikaris, and G.E. Karniadakis.
\newblock Physics-informed neural networks: A deep learning framework for solving forward and inverse problems involving nonlinear partial differential equations.
\newblock \emph{Journal of Computational Physics}, 378:\penalty0 686--707, 2019.
\newblock ISSN 0021-9991.

\bibitem[Saad et~al.(2023)Saad, Gupta, Alizadeh, and Maddix]{saad2022guiding}
Nadim Saad, Gaurav Gupta, Shima Alizadeh, and Danielle~C. Maddix.
\newblock Guiding continuous operator learning through physics-based boundary constraints.
\newblock In \emph{International Conference on Learning Representations}, 2023.

\bibitem[Shu et~al.(2023)Shu, Li, and Farimani]{shu2023physics}
Dule Shu, Zijie Li, and Amir~Barati Farimani.
\newblock A physics-informed diffusion model for high-fidelity flow field reconstruction.
\newblock \emph{Journal of Computational Physics}, 478:\penalty0 111972, 2023.

\bibitem[Sitzmann et~al.(2020)Sitzmann, Martel, Bergman, Lindell, and Wetzstein]{sitzmann2020implicit}
Vincent Sitzmann, Julien Martel, Alexander Bergman, David Lindell, and Gordon Wetzstein.
\newblock Implicit neural representations with periodic activation functions.
\newblock In \emph{Advances in {{Neural Information Processing Systems}}}, volume~33, pp.\  7462--7473, 2020.

\bibitem[Song et~al.(2020)Song, Meng, and Ermon]{song2020denoising}
Jiaming Song, Chenlin Meng, and Stefano Ermon.
\newblock Denoising diffusion implicit models.
\newblock \emph{arXiv preprint arXiv:2010.02502}, 2020.

\bibitem[Song \& Ermon(2019)Song and Ermon]{song2019generative}
Yang Song and Stefano Ermon.
\newblock Generative modeling by estimating gradients of the data distribution.
\newblock In \emph{Advances in {{Neural Information Processing Systems}}}, volume~32, 2019.

\bibitem[Song \& Ermon(2020)Song and Ermon]{song2020improved}
Yang Song and Stefano Ermon.
\newblock Improved techniques for training score-based generative models.
\newblock In \emph{Advances in {{Neural Information Processing Systems}}}, volume~33, pp.\  12438--12448, 2020.

\bibitem[Szegedy et~al.(2016)Szegedy, Vanhoucke, Ioffe, Shlens, and Wojna]{szegedy2016rethinking}
Christian Szegedy, Vincent Vanhoucke, Sergey Ioffe, Jon Shlens, and Zbigniew Wojna.
\newblock Rethinking the inception architecture for computer vision.
\newblock In \emph{Proceedings of the IEEE conference on computer vision and pattern recognition}, pp.\  2818--2826, 2016.

\bibitem[Vaswani(2017)]{vaswani2017attention}
Ashish Vaswani.
\newblock Attention is all you need.
\newblock \emph{arXiv preprint arXiv:1706.03762}, 2017.

\bibitem[Wang et~al.(2024)Wang, Zhang, Li, Wan, Chen, and Sun]{wang2024dmplug}
Hengkang Wang, Xu~Zhang, Taihui Li, Yuxiang Wan, Tiancong Chen, and Ju~Sun.
\newblock Dmplug: A plug-in method for solving inverse problems with diffusion models.
\newblock \emph{arXiv preprint arXiv:2405.16749}, 2024.

\bibitem[Wang et~al.(2020)Wang, Teng, and Perdikaris]{wang2020understanding}
Sifan Wang, Yujun Teng, and Paris Perdikaris.
\newblock Understanding and mitigating gradient pathologies in physics-informed neural networks.
\newblock \emph{arXiv preprint arXiv:2001.04536}, 2020.

\bibitem[Wang et~al.(2021)Wang, Wang, and Perdikaris]{wang2021learning}
Sifan Wang, Hanwen Wang, and Paris Perdikaris.
\newblock Learning the solution operator of parametric partial differential equations with physics-informed deeponets.
\newblock \emph{Science advances}, 7\penalty0 (40):\penalty0 eabi8605, 2021.

\end{thebibliography}
\bibliographystyle{iclr2025_conference}

\clearpage  
\newpage
\appendix
\centerline{\Large\bf Supplementary Material}

\section{Details of Constrained Sampling}\label{sec:sample_detail}
In this section, we further discuss additional details of our proposed ECI sampling framework. Our proposed ECI sampling interleaves extrapolation, correction, and interpolation stages at each sampling step for flow-based generative models to enforce the hard constraint. We will elaborate further on the implementation details below.

\begin{figure}[ht]
    \centering
    \includegraphics[width=\linewidth]{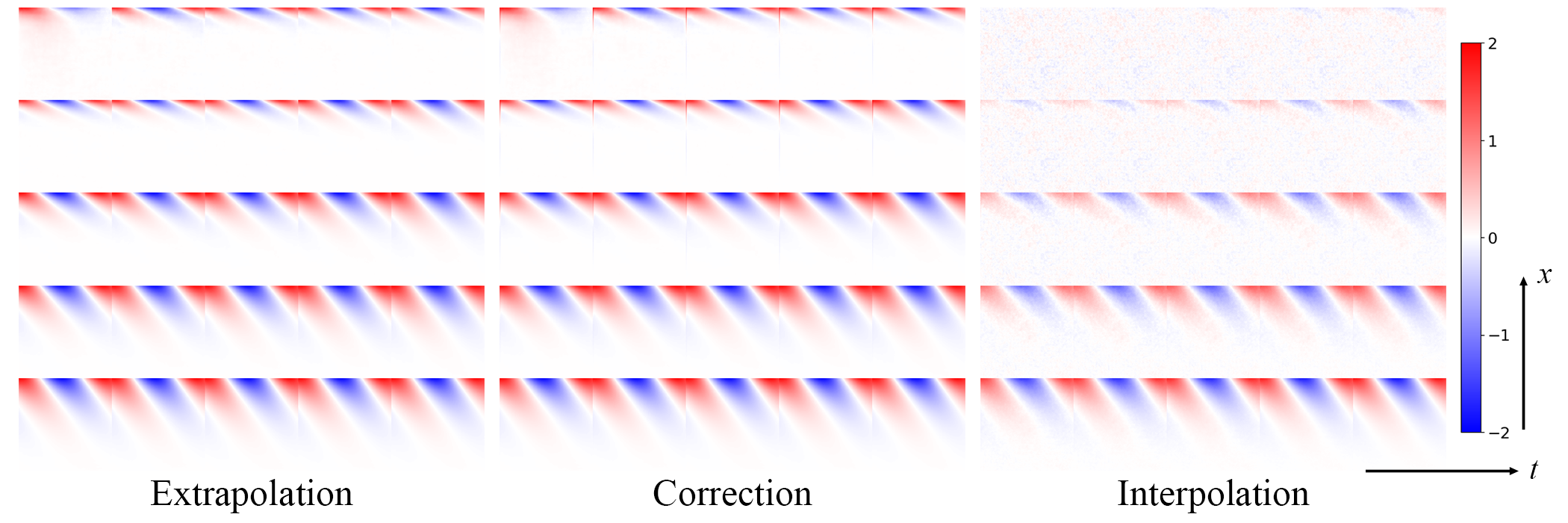}
    \vspace{-1em}
    \caption{Intermediate results (trajectory) of the extrapolation, correction, and interpolation stages for the Stokes problem with IC fixed. The trajectories demonstrate how the initial random noise gradually transforms into the controlled generation with decreasing artifacts around the boundary.}
    \label{fig:sampling}
\end{figure}

\subsection{Formal Definition of Constrained Generation}\label{sec:formal_def}
First, we provide a more rigorous measure-theoretic definition of constrained generation for functional data as a generative task. Consider a well-posed PDE system $\mathcal{F}_\phi u(x)=0, x\in\mathcal{X}$ with PDE parameters $\phi\in\Phi$, then the solution map $\mathcal{T}:\phi\mapsto u(x)$ is a well-defined mapping. 
For a fixed distribution $p_\Phi$ over the parameter $\phi$, we can naturally define the measure over the solution set as the push-forward of the solution map: $\mu_\mathcal{F}=\mathcal{T}_* p_\Phi$. 
Now consider some constraint operator $\mathcal{G}u(x)=0,x\in\mathcal{X_G}\subseteq \mathcal{X}$ defined on a subset of the PDE domain. Denote the solution set for $\mathcal{G}$ as $\mathcal{U}_{\mathcal{G}}=\{u(x):\mathcal{G}u(x)=0,x\in\mathcal{X_G}\}$ and the characteristic function over the solution set as $\chi_\mathcal{G}$ such that 
\begin{equation}
\chi_\mathcal{G}(u)=
\begin{cases}
    1, & u\in \mathcal{U_G}\\
    0, & u\not\in \mathcal{U_G}
\end{cases} .
\end{equation}

With the non-degenerating assumption that $\int_\mathcal{U} \chi_\mathcal{G}d\mu_\mathcal{F}>0$ (this is essentially saying that we should have at least one solution), the conditional measure can be obtained as 
\begin{equation}
    \mu_\mathcal{F|G}(A):=\mu_\mathcal{F}(A|\mathcal{G})=\left.\int_A\chi_\mathcal{G}d\mu_\mathcal{F}\middle/\int_\mathcal{U} \chi_\mathcal{G}d\mu_\mathcal{F}\right.,\quad\forall A\subseteq \mathcal{U} ,
\end{equation}
where $\mathcal{U}$ is the Hilbert space of all ``well-behaving'' functions (see \citet{kerrigan2023functional} for regularity conditions). We assume $\mu_\mathcal{F}$ can be approximated well by generative priors like FFM, and we want to guide it towards the conditional measure $\mu_\mathcal{F|G}$.
In practice, the ground truth solution map $\mathcal{T}$ can be obtained by either exact analytical solutions or by using numerical PDE solvers. With a pre-defined distribution over PDE parameters (often uniform over some interval, see Table~\ref{tab:dataset} and Appendix~\ref{sec:data} for detailed specification), we can easily sample the PDE parameters and apply the solution map to obtain the sampled functions for pre-training.

For constrained sampling, instead of directly calculating the conditional measure $\mu_\mathcal{F|G}$, we can sample the PDE parameters from the pull-back probability $p_{\Phi|\mathcal{G}}=\mathcal{T}^*\mu_\mathcal{F|G}$. In practice, we directly assume such a conditional probability distribution $p_{\Phi|\mathcal{G}}$ over the PDE parameters is known. Indeed, for all the experiments in this paper, we simply choose a subset of PDE parameters and apply the solution map to obtain functions as the ground truth constrained solutions.

\subsection{Extrapolation of Solutions}
The continuous-time and deterministic formulation of the flow-matching sampling makes it possible to apply different step sizes to advance the solver for the flow ODE. Specifically, at timestep $t$, we can make a one-step prediction with a step size of $1-t$ that directly advances the ODE solver to $t=1$ as $u_1=u_t+(1-t)v$. Theoretically, if the optimal transport paths are learned exactly, the flow should be straight \citep{lipman2022flow}, and any arbitrary discretization should lead to the same target. Indeed, in the left plot in Figure~\ref{fig:sampling}, the extrapolation gives reasonable prediction even at a timestep close to 0. The constraint, on the other hand, is not necessarily satisfied in this stage.

\subsection{Solution Correction}\label{sec:correction}
Our ECI sampling scheme is a unified framework for different constraint operators as long as the corresponding correction algorithm at $t=1$ is readily available. 
Specifically, we consider constraint operators $\mathcal{G}$ of two following categories: 1) \emph{value constraints} that specify exact values for a subset of the PDE domain: $u(x)=g(x),x\in\mathcal{X_G}\subseteq\mathcal{X}$, and 2) \emph{region constraints} that specify an exact value for the integration over some subset of the domain: $\int_{\mathcal{X_G}\subseteq\mathcal{X}}u(x)dx=a\in\mathbb{R}$. These two classes of constraints encompass a wide range of practical constraints for PDE systems.

\begin{example}
    For $\mathcal{X}=\Omega\times[0,T]$ where $\Omega \in \mathbb{R}^{D-1}$, the follow examples are value constraints:
    \begin{itemize}
        \item \emph{Initial condition} (IC): $u(x,0)=g(x),x\in\Omega$.
        \item \emph{Boundary condition} (BC): $u(x,t)=f(t),x\in\partial\Omega,t\in[0,T]$.
    \end{itemize}
\end{example}
\begin{example}
    For $\mathcal{X}=[0,1]\times[0,T]$, the following examples are region constraints:
    \begin{itemize}
        \item \emph{Mass conservation}: $\int_0^1 u(x,t)dx=0,t\in[0,T]$.
        \item \emph{Periodic boundary condition} (PBC): $u(0,t)=u(1,t),t\in[0,T]$.
    \end{itemize}
\end{example}

Furthermore, any linear conservation law can be understood as a region constraint, specifying a conserved quantity over a region in the PDE domain.
As these constraints are linear constraints, we follow \cite{hansen2023learning} to apply an oblique projection as a correction. The corresponding correction algorithms are outlined in Algorithm~\ref{alg:adjust_value} and \ref{alg:adjust_conserv}.

\begin{algorithm}[H]
\caption{Value Constraint Correction}\label{alg:adjust_value}
\begin{algorithmic}[1]
    \State \textbf{Input:} function $u_1$, constraint function $g$, region $\mathcal{X_G}$.
    \State $\Hat{u}_1\gets \mathbbm{1}[x\in \mathcal{X_G}]\odot g+\mathbbm{1}[x\not\in \mathcal{X_G}]\odot u_1$
    \State \textbf{return} $\Hat{u}_1$
\end{algorithmic}
\end{algorithm}
\begin{algorithm}[H]
\caption{Region Constraint Correction}\label{alg:adjust_conserv}
\begin{algorithmic}[1]
    \State \textbf{Input:} function $u_1$, conservation value $a$, region $\mathcal{X_G}$.
    \State $g\gets u_1 + (a - \int_\mathcal{X_G}u_1(x)dx)/\int_\mathcal{X_G}dx$
    \State $\Hat{u}_1\gets \mathbbm{1}[x\in \mathcal{X_G}]\odot g+\mathbbm{1}[x\not\in \mathcal{X_G}]\odot u_1$
    \State \textbf{return} $\Hat{u}_1$
\end{algorithmic}
\end{algorithm}

It is easy to verify that the corrected function $\Hat{u}_1$ in these correction algorithms indeed satisfies the corresponding constraint exactly. We can also have multiple (even infinitely many) non-overlapping constraints. For example, consider the generalized mass conservation laws $\int_0^1 u(x,t)dx=f(t)\equiv 0,t\in[0,T]$ for some given function $f(t)$, we have a different conservation law for each PDE timestep $t$. After discretization, we have a finite set of non-overlapping constraints that can be corrected according to Algorithm~\ref{alg:adjust_conserv}. The correction stage is demonstrated in the middle plot in Figure~\ref{fig:sampling}. Note how directly applying the correction algorithm creates noticeable artifacts around the boundary and how such artifacts gradually disappear when advancing the ODE solver.

\subsection{Interpolation of Solutions}
The iterative nature of flow sampling requires interpolation back to arbitrary timestep $t$ during each sampling step. This can be understood as the forward noising process along the conditional path of measures and can be efficiently calculated as the linear interpolation $u_t=(1-t)u_1+tu_0$ according to the FFM formulation \citep{kerrigan2023functional}. In the right plot in Figure~\ref{fig:sampling}, we demonstrate how the initial Gaussian process noise is transformed into the constrained generation. We have also discussed the impact of the re-sampling interval length to control stochasticity in the generation in Section~\ref{sec:stochasticity}.

\subsection{Proof of Exact Satisfaction of Constraints}
\begin{proposition}
    Suppose the corrected algorithm $C(u_1,\mathcal{G})$ in Equation~\ref{eqn:eci} satisfies the constraint $\mathcal{G}$ exactly, then for any number of mixing steps $M\ge 1$, the ECI sampling scheme described in Algorithm~\ref{alg:eci} exactly recovers the constraint in the final generation at $t=1$.
\end{proposition}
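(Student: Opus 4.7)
The plan is to trace what happens at the very last iterative step of Algorithm~\ref{alg:eci} and observe that the interpolation formula $u_{t'} = (1-t')u_0 + t'\hat{u}_1$ degenerates to $\hat{u}_1$ when $t' = 1$, so that the constraint satisfaction guarantee inherited from $C$ propagates directly to the returned sample.

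First, I would isolate the outer-loop iteration corresponding to $t = (N-1)/N$ in Algorithm~\ref{alg:eci}. At the final mixing step of that iteration (i.e., $m = M-1$), the algorithm invokes $\mathrm{ECI Step}(v_\theta,\mathcal{G},u^{(m)}_t,t+1/N)$ with the advanced argument $t' = t+1/N = 1$. Expanding that call using Algorithm~\ref{alg:eci_step}, the returned value is
\begin{equation*}
u_1 \;=\; (1-t')\,u_0 + t'\,\hat{u}_1 \;=\; 0\cdot u_0 + 1\cdot \hat{u}_1 \;=\; \hat{u}_1,
\end{equation*}
where $\hat{u}_1 = C(u_t + (1-t)v_\theta(u_t,t),\mathcal{G})$. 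By the standing hypothesis on $C$, the function $\hat{u}_1$ lies in $\mathcal{U_G}$, i.e.\ $\mathcal{G}\hat{u}_1 = 0$ on $\mathcal{X_G}$. Hence the output at $t=1$ satisfies the constraint exactly.

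Second, I would note that the argument above only uses the \emph{last} ECI step; all previous extrapolation/correction/interpolation updates (whether from earlier Euler iterations or from earlier mixing rounds within the same iteration) affect only the intermediate state fed into the final correction. Since the inner noise sample $u_0$ is annihilated by the factor $(1-t') = 0$, neither the choice of $M$ nor the re-sampling schedule for $u_0$ (cf.\ Section~\ref{sec:stochasticity}) can disturb the exactness at $t=1$. Therefore the conclusion holds uniformly for every $M \geq 1$.

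There is essentially no technical obstacle here; the proof is a one-line consequence of the OT-path interpolation being an affine combination that collapses at the endpoint. The only care needed is to match the algorithm's notation carefully: the final \textsc{else} branch in Algorithm~\ref{alg:eci} is the branch that overwrites $u_{t+1/N}$, and one must verify that this branch is executed on the last outer-loop iteration (when $t+1/N = 1$) regardless of $M$, which is immediate from the loop structure.
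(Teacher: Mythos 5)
Your proposal is correct and follows essentially the same route as the paper's own proof: both reduce to the final advancing ECI step at $t'=1$, observe that the interpolation $(1-t')u_0 + t'\hat{u}_1$ collapses to $\hat{u}_1 = C(\cdot,\mathcal{G})$ since the noise coefficient vanishes, and conclude from the hypothesis on $C$. The only cosmetic difference is that the paper phrases the reduction as "it suffices to consider $M=1$," while you argue directly that earlier mixing rounds only affect the state fed into the final correction; these are the same observation.
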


\begin{proof}
    As the advancing step for the ODE solver (the last mixing step) will always perform, it suffices to consider $M=1$. Consider the last Euler step at $t=1-1/N$, the linear interpolation procedure $q(\hat{u}_{t'}|\hat{u}_{1})=0\cdot u_0+1\cdot \hat{u}_{1}=\hat{u}_{1}$ will be deterministic as the noise will not contribute to the final interpolant at timestep $t'=1$. Therefore, the interpolation will deterministically produce $q(\hat{u}_{t'}|\hat{u}_{1})=\hat{u}_{1}=C(u_t+(1-t)v_\theta(u_t),\mathcal{G})$ which satisfies the constraint exactly.
\end{proof}

\section{Dataset Description and Generation} \label{sec:data}
This section provides a more detailed description of the datasets used in this work and their generation procedure. In addition to the statistics in Table~\ref{tab:data}, we further provide additional information in Table~\ref{tab:dataset} for pre-training the FFM as our generative prior. For dataset types, \emph{synthetic} indicates that the exact solutions are calculated on the fly based on the randomly sampled PDE parameters for both the training and test datasets. We manually assign the training set with 5k solutions and the test set with 1k solutions. On the other hand, \emph{simulated} indicates that the solutions are pre-generated using numerical PDE solvers and are different for the training and test datasets.

{
\renewcommand{\arraystretch}{1.5}

\begin{table}[ht]
\centering
\caption{More dataset specifications for pre-training the prior FFM.}\label{tab:dataset}
\begin{tabular}{@{}lccccc@{}}
\toprule
Dataset & PDE Parameter & Split & Spatial Domain & Time Domain & Type \\ \midrule
Stokes Problem & \makecell{$k\sim U[2,20]$\\ $\omega\sim U[2,8]$} & 5k / 1k & $[0,1]$ & $[0,1]$ & Synthetic \\
Heat Equation & \makecell{$\alpha\sim U[1,5]$\\ $\phi\sim U[0,\pi]$} & 5k / 1k & $[0,2\pi]$ & $[0,1]$ & Synthetic \\
Darcy Flow & \makecell{$k \text{ (see below)}$\\ $C\sim U[-2,2]$} & 10k / 1k & $[0,1]^2$ & NA & Simulated \\
NS Equation & $w_0,f$ (see below) & 10k / 1k & $[0,1]^2$ & $[0,49]$ & Simulated \\
PME & $m\sim U[1,6]$ & 5k / 1k & $[0,1]$ & $[0,1]$ & Synthetic \\
Stefan Problem & $u^*\sim U[0.55,0.7]$ & 5k / 1k & $[0,1]$ & $[0,0.1]$ & Synthetic \\ \bottomrule
\end{tabular}
\end{table}

}

\subsection{Stokes Problem}
The 1D Stokes problem is given by the heat equation:
\begin{equation}
\begin{aligned}
    &u_t=\nu u_{xx},& x\in[0,1],t\in[0,1],\\
    &u(x,0)=Ae^{-kx}\cos(kx),& x\in[0,1],\\
    &u(0,t)=A\cos(\omega t),& t\in[0,1],
\end{aligned}
\end{equation}
with viscosity $\nu\ge 0$, oscillation frequency $\omega$, amplitude $A>0$, and $k=\sqrt{\omega/(2\nu)}$. The analytical solution is given by
$u_\text{exact}(x,t)=Ae^{-kx}\cos(kx-\omega t)$. Note that $k$ and $\omega$ independently and uniquely define the IC and BC, respectively, and fixing both values reduces the PDE to a well-posed system with a unique solution.
We follow \citet{saad2022guiding} to construct the dataset by fixing $A=2$ and sampling $\omega\sim U[2,8],k\sim U[2,20]$ for pre-training. During constrained sampling, we test two different settings of prescribing IC with $k=5$ or BC with $\omega=6$, respectively. In the regression task, we provide both the IC and BC values. Figure~\ref{fig:setup} shows the ground truth solutions for the unconstrained and constrained Stokes problem.

\subsection{Heat Equation}
The 1D heat (diffusion) equation with periodic boundary conditions is given as
\begin{equation}
\begin{aligned}
    &u_t=\alpha u_{xx},& x\in[0,2\pi],t\in[0,1],\\
    &u(x,0)=\sin(x+\varphi),& x\in[0,2\pi],\\
    &u(0,t)=u(2\pi,t),& t\in[0,1],
\end{aligned}
\label{eqn:heat_eqn}
\end{equation}
where $\alpha$ denotes the diffusion coefficient and $\varphi$ denotes the phase of the sinusoidal IC. The exact solution is given as $u_\text{exact}(x,t)=e^{-\alpha t}\sin(x+\varphi)$. We sample $\alpha\sim U[1,5]$ and $\varphi\sim U[0,\pi]$ for pre-training. During constrained sampling, we fix the phase $\phi=\pi/4$. Figure~\ref{fig:heat_gt} shows the ground truth solution family.

In the uncertainty quantification task, we follow \citet{hansen2023learning} to fix $\phi=0$ and vary only the diffusion coefficient in $U[1,5]$ during pre-training. During constrained generation, we use the same setting to fix $\alpha=1,t=0.5$ for calculating the MSE and LL. The global conservation law for the heat equation in Equation~\ref{eqn:heat_eqn} is written as
\begin{equation}
    \int_0^{2\pi} u(x,t)dx=0,\quad t\in [0,1].
\end{equation}

\begin{figure}[ht]
\centering
\begin{minipage}{.34\textwidth}
    \centering
    \includegraphics[width=\textwidth]{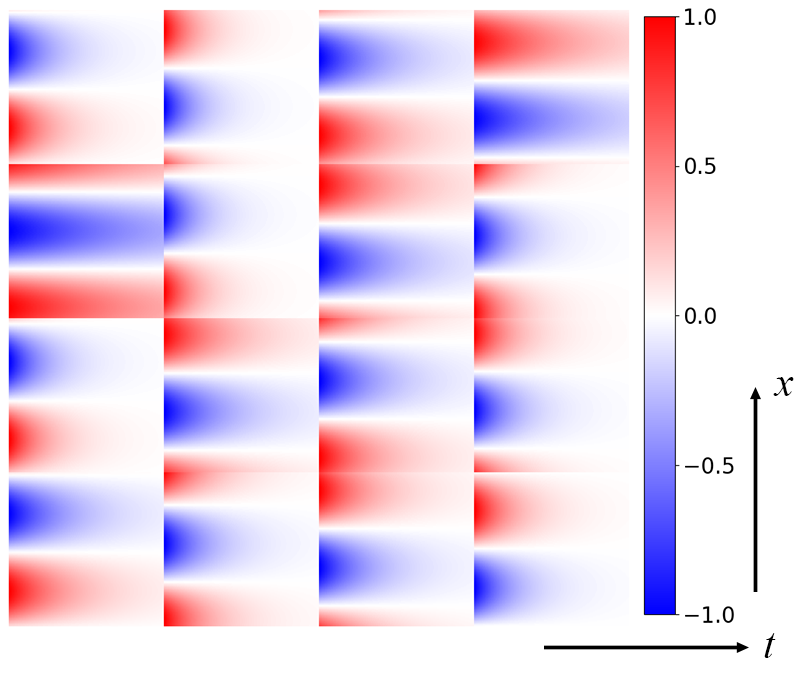}
    \vspace{-2em}
    \caption{Ground truth solutions for the heat equation with $\alpha\sim U[1,5],\varphi\sim U[0,\pi]$.}
    \label{fig:heat_gt}
\end{minipage}%
\hfill
\begin{minipage}{0.64\textwidth}
    \centering
    \includegraphics[width=\textwidth]{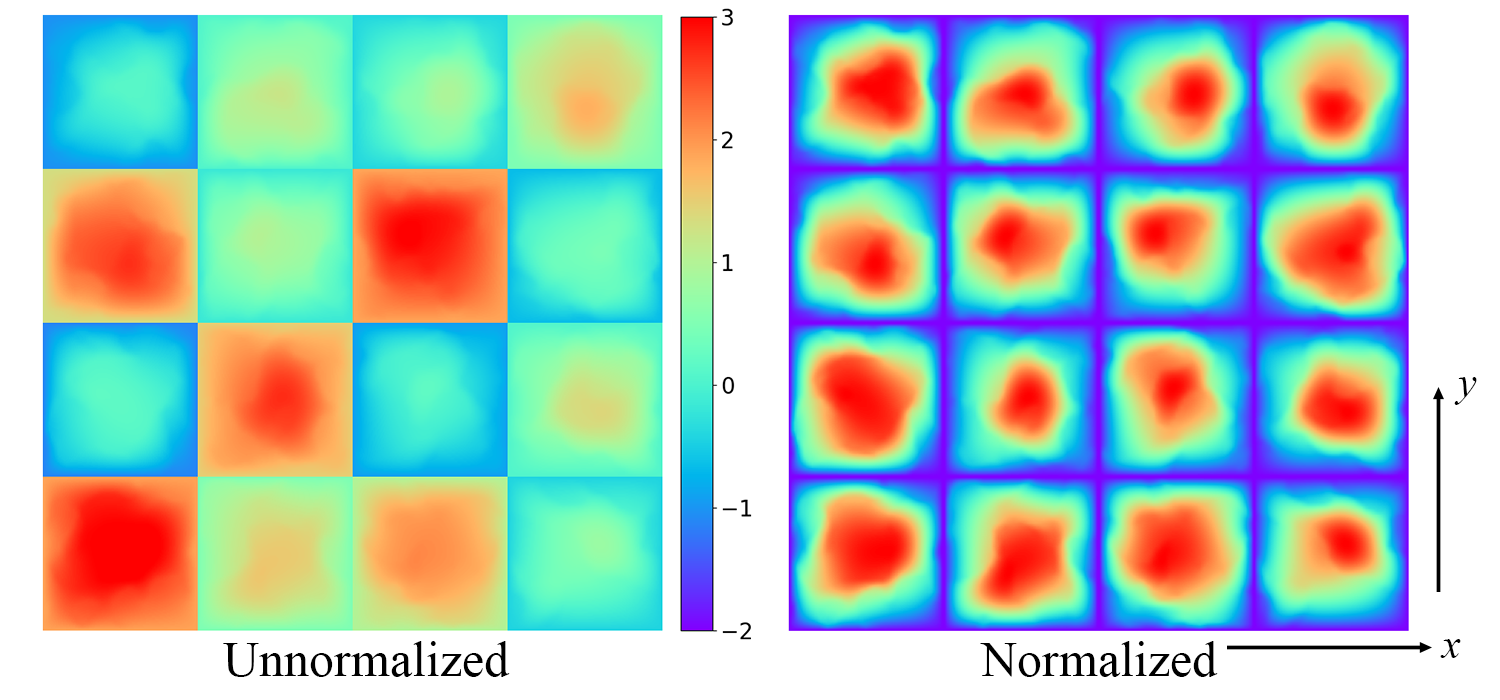}
    \vspace{-2em}
    \caption{Ground truth solutions for the Darcy flow with $C\sim U[-2,2]$. Solution values are unnormalized on the left and normalized per-solution on the right.}
    \label{fig:darcy_gt}
\end{minipage}
\end{figure}

\subsection{Darcy Flow}
The 2D Darcy flow is a time-independent, second-order, elliptic PDE with the following form:
\begin{equation}
\begin{aligned}
    -\nabla \cdot(k(x) \nabla u(x))&=f(x),& & x \in D=[0,1]^2,\\
    u(x)&=C,& & x \in \partial D,
\end{aligned}
\end{equation}
where $k$ denotes the permeability field and $f$ denotes the forcing function. We follow \citet{li2020fourier} to fix $f(x)\equiv 1$ and sample $k\sim\psi_*\mathcal{N}(0,(-\Delta+9I)^{-2})$ with zero Neumann boundary conditions on the Laplacian. The mapping $\psi:\mathbb{R}\to\mathbb{R}$ takes the value 12 for positive numbers and 3 for the negative numbers, and the pushforward is defined pointwise. Unlike previous work, the boundary condition is sampled from $C\sim U[-2,2]$. It can be verified that, for the same $k$, if $u_0$ is the solution for prescribing the boundary condition $u(x)=0,x\in\partial D$, then $u_0+C$ is the solution when prescribing the boundary condition $u(x)=C,x\in\partial D$. Therefore, we simulate 1000 samples with different $k$ and the zero boundary condition and sample $C$ on the fly for both the training and testing datasets. During constrained generation, we fix $C=1$. Figure \ref{fig:darcy_gt} shows the ground truth solution family.

\subsection{Navier-Stokes Equation}
The 2D Navier-Stokes (NS) equation for a viscous, incompressible fluid in the vorticity form with periodic boundary conditions is given as
\begin{equation}
\begin{aligned}
\partial_{t} w(x, t)+u(x, t) \cdot \nabla w(x, t) & =\nu \Delta w(x, t)+f(x), & & x \in[0,1]^{2}, t \in[0, T], \\
\nabla \cdot u(x, t) & =0, & & x \in[0,1]^{2}, t \in[0, T], \\
w(x, 0) & =w_{0}(x), & & x \in[0,1]^{2},
\end{aligned}
\end{equation}
where $u$ denotes the velocity field, $w=\nabla\times u$ denotes the vorticity, and $w_0$ denotes the initial vorticity. We follow \citet{li2020fourier} to sample $w_0\sim\mathcal{N}(0,7^{3/2}(-\Delta+49I)^{-5/2})$ with periodic boundary conditions. The forcing term is defined as $f(x)=0.1\sqrt{2}\sin(2\pi(x_1+x_2)+\phi)$ where $\phi\sim U[0,\pi/2]$, and the viscosity is fixed to be $\nu=10^{-3}$. We sample 100 initial vorticities and 100 forces with linearly spaced $\phi$ in the interval $[0,\pi/2]$. We mesh-grid the vorticities and forces to build 10000 solutions for the training set. For the testing set, we sample another 10 initial vorticities and mesh-grid them with the same 100 forces, resulting in another 1000 solutions. We use $T=49$ and sample 50 snapshots of the numerical solutions. We use the same generative NS equation datasets for training for the regression task that maps the first 15 frames to the other 35 frames. The regression evaluation is performed on the first sample in the test set.

\subsection{Porous Medium Equation}
The nonlinear Porous Medium Equation (PME) with zero initial and time-varying Dirichlet left boundary conditions is given as
\begin{equation}
\begin{aligned}
    &u_t=\nabla\cdot(u^m \nabla u),& x\in[0,1],t\in[0,1],\\
    &u(x,0)=0,& x\in[0,1],\\
    &u(0,t)=(mt)^{1/m},& t\in[0,1],\\
    &u(1,t)=0,& t\in[0,1] ,
\end{aligned}
\label{eqn:pme}
\end{equation}
where $m\ge 1$. The exact solution is given as $u_\text{exact}(x,t)=(m\,\text{ReLU}(t-x))^{1/m}$. We follow \citet{hansen2023learning} to sample $m\sim U[1,5]$ for pre-training and fix $m=1,t=0.5$ for sampling. The conservation law for the PME in Equation \ref{eqn:pme} is written as
\begin{equation}
    \int_0^1 u(x,t)dx=\frac{(mt)^{1+1/m}}{m+1},\quad t\in [0,1] .
\end{equation}
Note that this conservation law implicitly contains information about $m$.

\subsection{Stefan Problem}
The Stefan problem is a challenging and nonlinear case of the Generalized Porous Medium Equation (GPME). With fixed Dirichlet boundary conditions, it is given as:
\begin{equation}
\begin{aligned}
    &u_t=\nabla\cdot(k(u)\nabla u),\quad& x\in[0,1],t\in[0,T],\\
    &u(x,0)=0,& x\in[0,1],\\
    &u(0,t)=1,& t\in[0,T],\\
    &u(1,t)=0,& t\in[0,T],
\end{aligned}
\label{eqn:stefan}
\end{equation}
where $k(u)$ denotes the nonlinear step function with respect to a fixed shock value $u^*$:
\begin{equation}
k(u)=
\begin{cases}
    1,&u\ge u^*\\
    0,&u<u^* .
\end{cases}
\end{equation}
The exact solution is given as
\begin{equation}
    u_\text{exact}(x,t)=\mathbbm{1}[u\ge u^*]\left(1-(1-u^*)\frac{\erf(x/(2\sqrt{t}))}{\erf(\alpha)}\right) ,
\end{equation}
where $\erf(z)=\frac{2}{\sqrt{\pi}}\int_0^z \exp(-t^2)dt$ is the error function and $\alpha$ is uniquely determined by $u^*$ as the solution for the nonlinear equation $(1-u^*)/\sqrt{\pi}=u^*\erf(\alpha)\alpha\exp(\alpha^2)$. We follow \citet{hansen2023learning} to sample the shock value $u^*\sim U[0.55, 0.7]$ and use $T=0.1$. During sampling, we fix $u^*=0.6,t=0.05$. The conservation law for the Stefan problem in Equation \ref{eqn:stefan} is
\begin{equation}
    \int_0^1 u(x,t)dx=\frac{2(1-u^*)}{\erf(\alpha)}\sqrt{\frac{t}{\pi}},\quad t\in [0,1] .
\end{equation}

\section{Functional Flow Matching as the Generative Prior}
This section provides additional mathematical backgrounds on flow matching, functional flow matching, and our pre-training settings for FFM as the generative prior.

\subsection{Flow Matching}
We first provide more details regarding the flow matching framework on common Euclidean data (e.g., pixel values of images) before proceeding to functional flow matching for functional data (e.g., PDE solutions).
Flow matching \citep{lipman2022flow} is a generative framework built upon continuous normalizing flows \citep{chen2018neural}. This flow-based model can be viewed as the continuous generalization of the score matching (diffusion) model that allows for a more flexible design of the denoising process following the optimal transport formulation \citep{lipman2022flow}. Flow matching tries to learn the time-dependent \emph{vector field} $v_t:\mathbb{R}^d\times [0,1]\to\mathbb{R}^d$ that defines a continuous time-dependent diffeomorphism called the \emph{flow} $\psi_t:\mathbb{R}^d\times [0,1]\to\mathbb{R}^d$ via the following \emph{flow ODE}:
\begin{equation}
    \frac{\partial}{\partial t}\psi_t(x)=v_t(\psi_t(x)),\quad x_0\sim p_0(x) ,
    \label{eqn:flow_ode}
\end{equation}
where $p_0$ is the initial noise distribution.
The flow induces a probability path with the push-forward $p_t=(\psi_t)_*p_0$ for generative modeling. The vanilla flow matching loss can be written as
\begin{equation}
    \mathcal{L}_\text{FM}=\mathbb{E}_{t\sim U[0,1],x_1\sim p_1(x)}[\|v_\theta(x_t,t)-u_t(x_t)\|^2] ,
\end{equation}
where $x_t:=\psi_t(x)$ is the noise data, $p_1$ is the target data distribution, and $u_t(x_t)$ is the ground truth data vector field at $x_t$ and timestep $t$. The vanilla flow matching objective is generally intractable, as we do not know the ground truth vector fields. \citet{lipman2022flow} demonstrates a key observation that, when considering the \emph{conditional} probability path $\psi_t(x|x_1)$ conditioned on the target data, the \emph{conditional} vector field $u_t(x_t|x_1)$ can be calculated analytically while sharing the same gradient with the vanilla flow matching objective. The conditional flow matching objective can be written as
\begin{equation}
    \mathcal{L}_\text{CFM}=\mathbb{E}_{t\sim U[0,1],x_1\sim p_1(x)}[\|v_\theta(x_t,t)-u_t(x_t|x_1)\|^2] .
\end{equation}
Following \citet{chen2023riemannian}, when further conditioned on the noise $x_0\sim p_0(x)$, the CFM loss can be reparameterized into a simple form:
\begin{equation}
    \mathcal{L}_\text{CFM}=\mathbb{E}_{t\sim U[0,1],x_0\sim p_0(x),x_1\sim p_1(x)}[\|v_\theta(x_t,t)-(x_1-x_0)\|^2],\quad x_t=(1-t)x_0+t x_1 .
    \label{eqn:cfm_loss}
\end{equation}
The linear interpolation above corresponds to the \emph{optimal-transport probability path} (OT-path) in \citet{lipman2022flow}, which enjoys additional theoretical benefits of straighter vector fields over other options, including the variance-preserving path \citep{ho2020denoising} or the variance-exploding path \citep{song2019generative}. During sampling, the flow ODE in Equation~\ref{eqn:flow_ode} is solved with the learned vector field to obtain the final generation during sampling.
The deterministic flow ODE makes it potentially easier to guide than the stochastic Langevin dynamics or stochastic differential equations (SDEs) in the diffusion formulation. 

\subsection{Functional Flow Matching} \label{sec:ffm}
Functional flow matching (FFM) \citep{kerrigan2023functional} further extends the conditional flow matching framework to modeling functional data --- intrinsically continuous data like PDE solutions. As mentioned in the original work, the major challenge of extending the CFM framework lies in the fact that probability densities are ill-defined over the infinite-dimensional Hilbert space of continuous functions. FFM proposed to generalize the idea of flow matching to define \emph{path of measures} using measure-theoretic formulations. Specifically, consider a real separable Hilbert space $\mathcal{U}$ of functions $u:\mathcal{X}\to\mathbb{R}$ equipped with the Borel $\sigma$-algebra $\mathcal{B(U)}$. 
FFM learns a time-dependent vector field operator $v_t:\mathcal{U}\times[0, 1]\to\mathcal{U}$ that defines a time-dependent diffeomorphism $\psi_t:\mathcal{U}\times[0, 1]\to\mathcal{U}$ called the \emph{flow} via the differential equation 
\begin{equation}
    \partial_t \psi_t(u)=v_t(\psi_t(u)),\quad u_0\sim\mu_0(u) ,
\end{equation}
where $\mu_0$ is some fixed noise measure from which random continuous functions can be sampled. Similar to CFM, the flow $\psi_t$ induces a push-forward measure $\hat{\mu}_t:=(\psi_t)_*\mu_0$ for generative modeling. FFM demonstrates that the conditional formulation in CFM to deduce a tractable flow matching objective can also be adapted for the path of measures under some regularity conditions. In this way, also relying on the optimal transport path of measures, the FFM objective shares a similar format as the CFM loss in Equation~\ref{eqn:cfm_loss}:
\begin{equation}
    \mathcal{L}_\text{FFM}=\mathbb{E}_{t\sim U[0,1],u_0\sim \mu_0(u),u_1\sim \mu_1(u)}[\|v_\theta(u_t,t)-(u_1-u_0)\|^2],\quad u_t=(1-t)u_0+t u_1 , 
    \label{eqn:ffm_loss}
\end{equation}
where $u_t:=\psi_t(u|u_1)$ is the interpolation along the \emph{conditional} path of measures and $\mu_1$ is the target data measure. The norm is the standard $L^2$-norm for square-integrable functions. Similarly, sampling for FFM can be thought of as solving the flow ODE with the learned vector field. We demonstrate the Euler method for FFM sampling in Algorithm~\ref{alg:uncond_euler}.

We also noted that \citet{lim2023score} proposed the diffusion denoising operator (DDO) as an extension of diffusion models to function spaces. DDO relies on the non-trivial extension of Gaussian measures on function spaces. Compared to DDO, FFM has a more concise mathematical formulation and better empirical generation results. Therefore, we use FFM as our generative prior.

\subsection{Pre-Training for FFM}\label{sec:pretrain}
We follow \citet{kerrigan2023functional} to use Fourier Neural Operator (FNO) \citep{li2020fourier} as the vector field operator parameterization with additional relative coordinates and sinusoidal time embeddings \citep{vaswani2017attention} concatenated to the noised function as inputs. Following FFM, for all 2D data (1D PDE with a time dimensional or Darcy flow), the prior noises are sampled from the 2D Gaussian process with a Matérn kernel with a kernel length of 0.001 and kernel variance of 1. For 3D data (2D NS equation), sampling from the Matérn kernel is prohibitively expensive. Though \citet{kerrigan2023functional} has indicated that the white noise does not meet the regularity requirement from mathematical considerations, we empirically found that such white noise prior performed well enough. Therefore, for 3D data, we always sample from the standard white noise. 

For 2D data, we use a four-layer FNO with a frequency cutoff of 32 × 32, a time embedding channel of 32, a hidden channel of 64, and a projection dimension of 256, which gives a total of 17.9M trainable parameters. All FFMs are trained on a single NVIDIA A100 GPU with a batch size of 256, an initial learning rate of $3\times 10^{-4}$, and 20k iterations (approximately 1000 epochs for a 5k training dataset).

For 3D data, we use a two-layer FNO with a frequency cutoff of 16 × 16 × 16, a time embedding channel of 16, a hidden channel of 32, and a projection dimension of 256, which gives a total of 9.46M trainable parameters for efficiency concerns. This model is trained on 4 NVIDIA A100 GPUs with a batch size of 24 (per GPU) for approximately a total number of 2M iterations (or 5000 epochs) with an initial learning rate of $3\times 10^{-4}$.

It is worth noting that our proposed ECI sampling framework, as a unified zero-shot approach for guiding pre-trained flow-matching models, does not require conditional training like the conditional FFM baseline. Therefore, we only need to train a separate model for each different PDE system, but not for each different constraint. For example, the two generative tasks and the regression task on the Stokes equation are based on the same pre-trained FFM on the unconstrained Stokes problem dataset.

\begin{table}[ht]
\centering
\caption{Mean and standard deviation MSEs between the generation by the pre-trained unconstrained FFM model and the test dataset.} \label{tab:ffm_eval}
\begin{tabular}{@{}lcccc@{}}
\toprule
Dataset & Stokes Problem & Heat Equation & Darcy Flow & NS Equation \\ \midrule
MMSE / $10^{-2}$ & 0.045 & 0.130 & 5.421 & 0.992 \\
SMSE / $10^{-2}$ & 0.039 & 0.198 & 4.121 & 0.371 \\ \bottomrule
\end{tabular}
\end{table}

To evaluate the pre-training quality, we provide MMSE and SMSE between the generation by the pre-trained unconstrained FFM model and the test dataset in Table~\ref{tab:ffm_eval} as the evaluation metrics. It can be seen that most FFMs can achieve a decent approximation of the prior unconstrained distribution over the solution set with small MMSEs and SMSEs. For the 2D Darcy flow, the boundary condition has a more significant impact on the final solution as all pixel values should be shifted by the same value, leading to larger errors in the distributional properties. The generation, on the other hand, does seem reasonable.

We also provide the generative metrics of the unconstrained pre-trained FFM without any guidance as a sanity check in Table~\ref{tab:gen_res}. It can be demonstrated that there is indeed a shift in the distribution from the generative prior learned by the unconstrained FFM, as many errors are significant. Our proposed ECI sampling can successfully capture such a shift in the distribution with significantly lower MSEs and a closer resemblance to the ground truth constrained distribution.

\section{Experimental Setup} \label{sec:setup}
In this section, we provide further details regarding the experimental setups and evaluation metrics. We also give a brief introduction to the baselines used in our experiments.

\subsection{Sampling Setup} \label{sec:sample_setup}
For the baseline models, we use the adaptive Dopri5 ODE solver \citep{DORMAND198019} for CondFFM and ProbConserv. For ECI and DiffusionPDE, we used 200 Euler steps for 2D datasets and 100 steps for 3D datasets. For D-Flow, due to the limitation of GPU memory, we use 100 Euler steps for 2D datasets and 20 Euler steps for 3D datasets. The hyperparameters for sampling are summarized in Table~\ref{tab:time}.

Generative models have intrinsic stochasticity, as the sampling procedure has randomness in the prior noise distribution. To provide a more robust evaluation of the generated results, we always generate 512 samples for every 2D dataset and baseline for computing all evaluation metrics and the sampling time in Table~\ref{tab:time}. 100 samples are generated for all the NS equation tasks (3D dataset). In this way, we aim to minimize the impact of randomness in the generation and provide a better estimation of the statistics and metrics.

\subsection{Evaluation Metrics}
As we have discussed in Section~\ref{sec:empirical_results}, for generative tasks, we calculate distributional properties like the mean squared errors of mean and standard deviation (\textbf{MMSE} and \textbf{SMSE}) to measure the errors from the ground truth statistics. We also use the constraint error (\textbf{CE}) to measure the violation of constraints.

The Fréchet Poseidon distance (\textbf{FPD}) is inspired by the metric of Fréchet Inception distance (FID) widely used in the image generation domain to evaluate the generative quality using the pre-trained InceptionV3 model \citep{szegedy2016rethinking}. By comparing the Fréchet distance between the hidden activations of the data captured by a pre-trained discriminative model, such a score provides a measurement of the similarity between data distributions. A smaller Fréchet distance to the ground truth data distribution indicates a closer resemblance to the ground truth data and, thus, a better generation quality. Inspired by FID, we propose to use the pre-trained PDE foundation model \emph{Poseidon} \citep{herde2024poseidon} to generate the hidden activations. We used the base version of Poseidon with 157.8M model parameters. Poseidon takes the initial frame, $x$-velocity field, $y$-velocity field, pressure field, and timestep as inputs and outputs predictions of the frame after the timestep. We always zeroed out the $x$-velocity field, $y$-velocity field, and pressure field in the inputs. For 2D data, the generated solutions are directly fed into Poseidon as the initial frames, and the timestep is always fixed to 0. For 3D data, each time frame of the solution trajectories is fed separately into Poseidon. The Fréchet distances are calculated separately for different time frames and are averaged to obtain the final FPD score. As we want to leverage the pre-trained Poseidon to extract high-level representations of the generated solutions, we extract the last hidden activations of the encoder for Fréchet distance calculation. The hidden activation size for the Poseidon base model is 784 × 4 × 4, which we mean-pool into a 784-dimensional vector for FPD calculation. 

For regression tasks, we can directly calculate the \textbf{MSE} with respect to the ground truth and the \textbf{CE} for constraint violation. Following \citet{hansen2023learning}, the log-likelihood (\textbf{LL}) is calculated pointwise as 
\begin{equation}
    \text{LL}=\log p\left(u_1|\mathcal{N}(\Hat{\mu},\Hat{\sigma}^2 I)\right)=-\frac{u_1-\Hat{\mu}}{2\Hat{\sigma}^2}-\log \Hat{\sigma}-\frac{1}{2}\log(2\pi),
\end{equation}
where $u_1$ is the ground truth data and $\Hat{\mu},\Hat{\sigma}$ are the generation mean and standard deviation, respectively.

\subsection{Constraint Enforcement}
For value constraints, we can easily enforce the constraint as described in Algorithm~\ref{alg:adjust_value}. Specifically, the initial condition is defined as the first column values after discretization, and the boundary condition is defined as the first row values. For 2D Darcy flow, the boundary condition is defined as the outer-most pixels in all four directions, thus leading to an additional resolution along each dimension.
For conservation laws, the integral is approximated with the Riemann sum across the constrained region with uniform discretization along each dimension. The constraint error is also calculated using the same Riemann sum.

In the uncertainty quantification task, we follow \citet{hansen2023learning} to experiment with the zero-shot superresolution setting. While the pre-trained FFM is trained on a spatiotemporal resolution of 100 × 100, during constrained generation, a resolution of 200 × 200 is instead enforced. Following \citet{hansen2023learning}, 100 random context points (0.25\%) of the ground truth solution are provided as the value constraint, with additional conservation laws given in Appendix~\ref{sec:data}.

\subsection{Baseline Models}

\paragraph{CondFFM.} The conditional FFM model \citep{kerrigan2023functional} assumes the family of constraints is known \textit{a priori}. During training, the constraints are fed to the FFM model as additional information. In this way, CondFFM can only handle value-based constraints but not conservation laws. In practice, we copy the input function and set the pixels in the constrained region to the corresponding values and the pixels in the unconstrained region to zero. This condition is then concatenated to the input channel-wise as additional information. Similarly, this condition is fed into the model during the constrained sampling.

We use an almost identical FNO-based encoder for CondFFM except for one additional channel for the condition. For a fair comparison, the training hyperparameters are the same as the corresponding generative FFM. As CondFFM has additional information on the constraint in both training and sampling, we naturally expect it to exhibit better performance than other zero-shot (training-free) baselines. As a drawback for all models that require training adjustment, different CondFFM models need to be trained separately for different constraints even on the same PDE system (e.g., IC and BC for the Stokes problem), making it less flexible to different constraints.

\paragraph{ANP.} The Attentive Neural Process (ANP) \citep{kim2019attentive} models the conditional distribution of a function $u$ at a specific set of target points $\{x_i\}_{i\in T}$ given another set of context points $\{x_i\}_{i\in C}$. The ANP uses the attention mechanism and variational approach to maximize the variational lower bound for the data likelihood. As it is a special case of neural processes, the ANP can take both target and context points of arbitrary sizes.

In practice, we use 100 random context points and 1000 random target points during training. The prior model is trained on the same amount of data as the corresponding FFM models. During sampling, the context points are fixed to be the values in the constraints, and all discretized spatiotemporal coordinates are used as the target points in the generation. ANP model is also zero-shot and can be applied to different value constraints.

\paragraph{ProbConserv.} ProbConserv \citep{hansen2023learning} is a general black-box posterior sampling method specifically designed for the exact satisfaction of various PDE constraints. ProbConserv adopts the gradient-free approach that directly projects the generation to the corresponding solution space using the least squares method. When applied to value constraints, ProbConserv (the version that ensures the exact satisfaction of constraints) directly modifies the corresponding pixels to the constraint values, often leaving noticeable artifacts between the constrained and the unconstrained regions. For a fair comparison, the ProbConserv model in our experiments is built upon the same pre-trained FFM model. As ProbConserv directly operates on the final generation, it does not offer fine-grained control over the intermediate steps of the iterative sampling process of flow-matching models. 

\paragraph{DiffusionPDE.} DiffusionPDE \citep{huang2024diffusionpde} uses gradient guidance from the constraint loss and the PINN loss \citep{RAISSI2019686,li2024physics} to modify the vector field at each step to minimize these losses while maintaining the physical structure of the PDE system. This gradient-based approach is also representative of many models for inverse problems, specifically, the diffusion posterior sampling approach \citep{chung2022diffusion}. As our PDE system may have variations in the PDE parameters, the PINN loss is not applicable. Thus, we only use the constraint loss to guide the model at each time step. In practice, we follow the original paper to test several guidance strengths from $10^2$ to $10^3$ and choose the value with the lowest MSE.

\paragraph{D-Flow.} D-Flow \citep{ben2024d} guides the final generation via optimizing the initial noise $u_0$ rather than modifying the vector field. Assuming the constraint loss $L(\Hat{u}_1)$ is differentiable, D-Flow backpropagates the gradient of the loss through the ODE solver to the initial noise as $\nabla_{u_0}L(\Hat{u}_1)$, hoping such a direct optimization can reduce the final constraint loss. Such a direct gradient-based approach is extremely memory-demanding and time-consuming, as the simplest Euler solver requires about 100 steps to achieve decent generation results. We follow the original work to use the LBFGS optimizer with a learning rate of 1 and 20 maximum iterations.

\paragraph{FNO.} The Fourier Neural Operator (FNO) \citep{li2020fourier} is used as the additional baseline in regression tasks of neural operator learning. We use an almost identical architecture as the encoder for FFM, except for the missing time embedding part. Similar to CondFFM, the pixels in the constrained region are set to the constraint values, while other pixels are set to zero. For a fair comparison, the FNO model tries to predict all the pixel values for the solution domain, including the constrained ones already given as the input. As the regression models are easier to overfit, we apply early-stopping techniques to choose the best model checkpoint for evaluation. The FNO was trained for 10k iterations for the Stokes problem and 500k iterations for the NS equation when the valid loss plateaued.

\section{Additional Results and Visualizations}\label{sec:more_res}
In this section, we provide additional experimental results and ablation studies to further demonstrate the effectiveness of our proposed ECI sampling scheme. Additional visualizations of the generations and evaluation metrics are also provided in this section.

\subsection{Generative Tasks}\label{sec:more_res_gen}
\subsubsection{Additional Visualization}

\begin{figure}[htb]
    \centering
    \includegraphics[width=\linewidth]{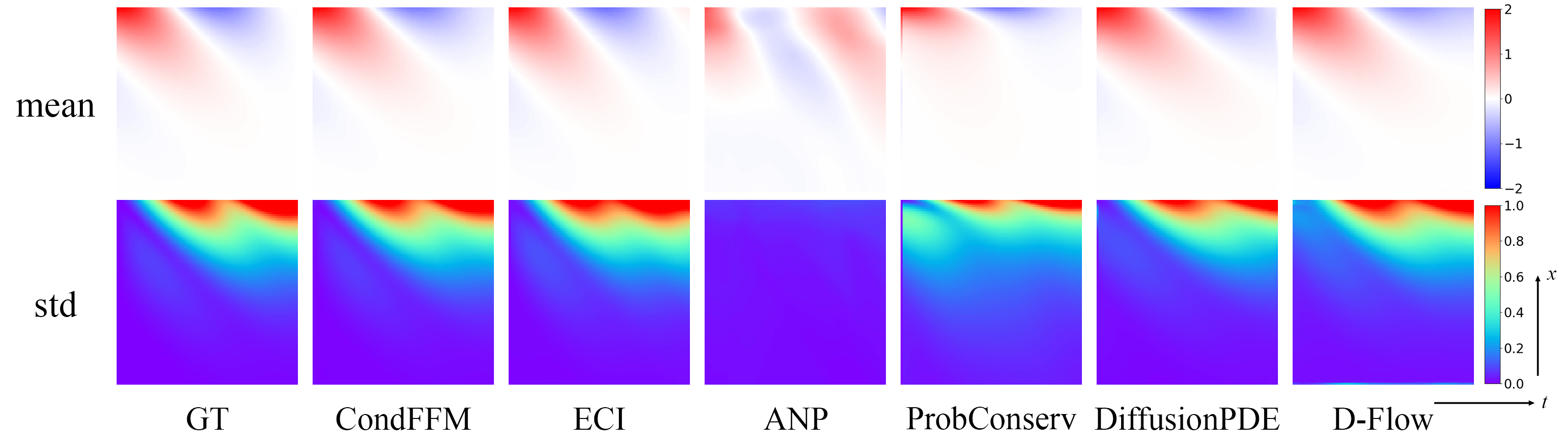}
    \vspace{-1.5em}
    \caption{Generation mean and standard deviation for the Stokes problem with IC fixed.}
    \label{fig:stokes_ic}
\end{figure}

\begin{figure}[htb]
    \centering
    \includegraphics[width=\linewidth]{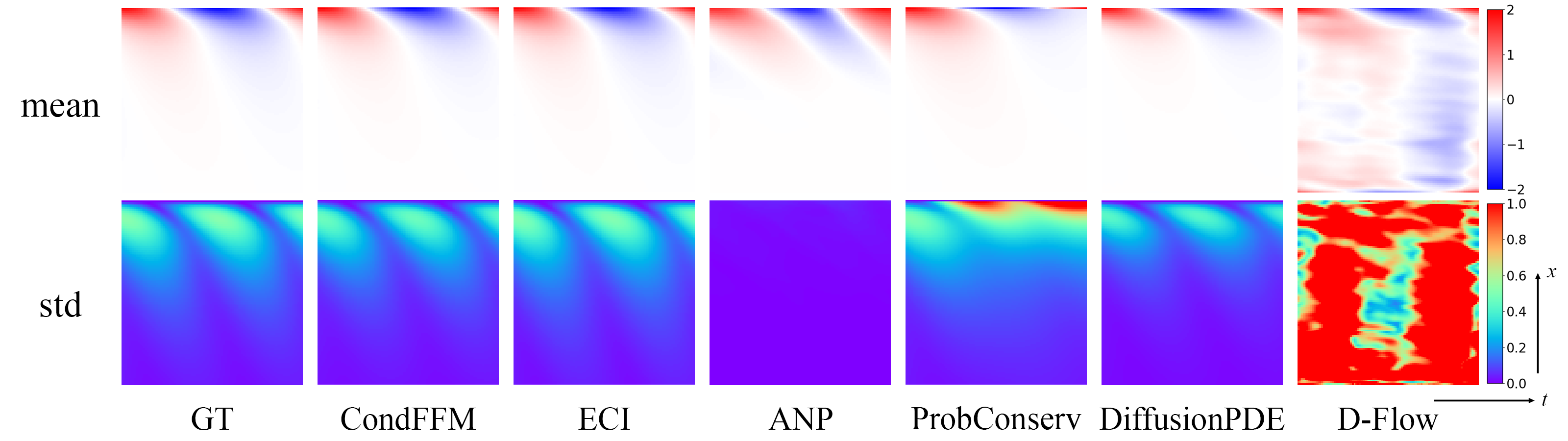}
    \vspace{-1.5em}
    \caption{Generation mean and standard deviation for the Stokes problem with BC fixed.}
    \label{fig:stokes_bc}
\end{figure}

\begin{figure}[htb]
    \centering
    \includegraphics[width=\linewidth]{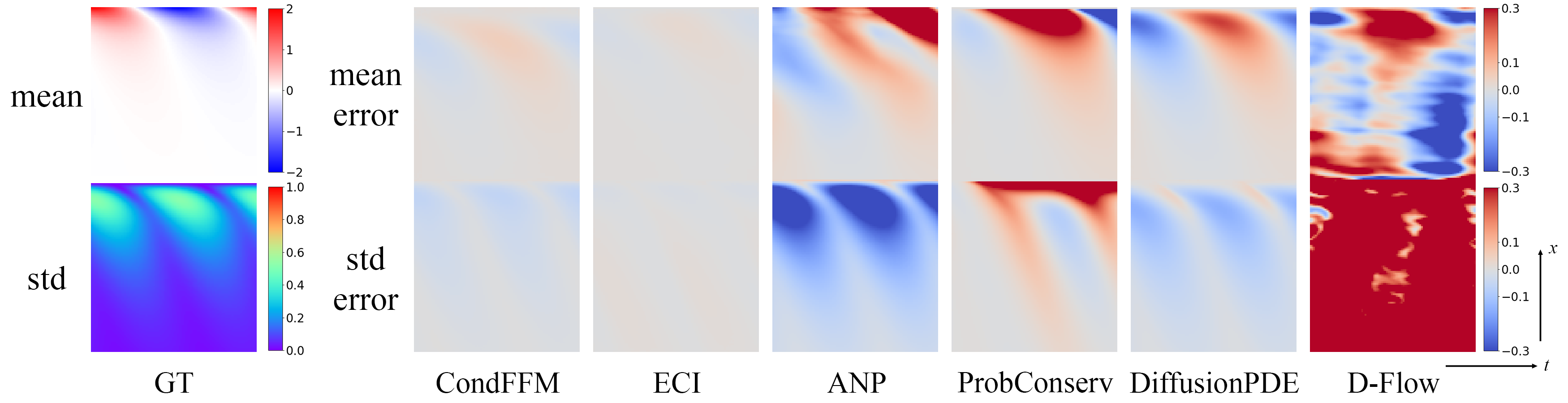}
    \vspace{-1.5em}
    \caption{Generation mean and standard deviation errors for the Stokes problem with BC fixed.}
    \label{fig:stokes_bc_error}
\end{figure}

In addition to the error plots in Figure~\ref{fig:stokes_ic_error} and \ref{fig:ns_error}, we further provide visualizations of the generation statistics in Figure~\ref{fig:stokes_ic} (Stokes problem with IC fixed), \ref{fig:stokes_bc} (Stokes problem with BC fixed), \ref{fig:heat} (heat equation), \ref{fig:darcy} (Darcy flow), and \ref{fig:ns} (2D NS equation). 

Errors for generation statistics for different systems are also provided in Figure~\ref{fig:stokes_bc_error} (Stokes problem with BC fixed), \ref{fig:heat_error} (heat equation), and \ref{fig:darcy_error} (Darcy flow). 
It can be demonstrated more clearly that gradient-based methods like DiffusionPDE and D-Flow had more artifacts around the boundary and did not satisfy the constraint exactly. For ProbConserv, though its gradient-free approach indeed ensured the exact satisfaction of the constraint, the non-iterative control of the prior flow model left noticeable artifacts between the constrained and the unconstrained regions. Similar trends can also be observed in other PDE systems. 


\begin{figure}[htb]
    \centering
    \includegraphics[width=\linewidth]{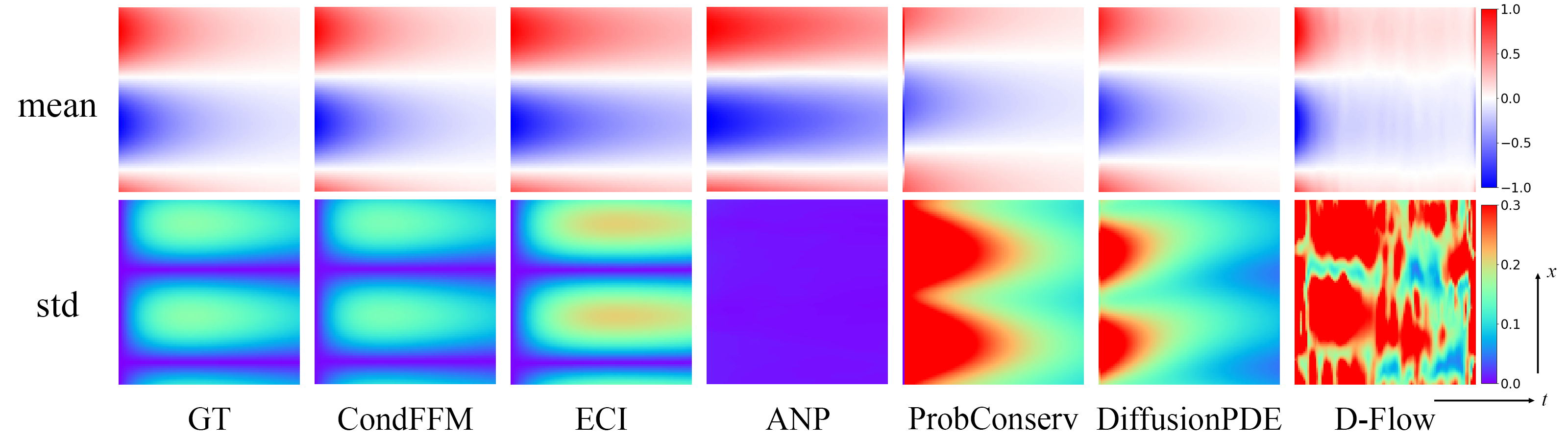}
    \vspace{-1.5em}
    \caption{Generation mean and standard deviation for the heat equation with IC fixed.}
    \label{fig:heat}
\end{figure}

\begin{figure}[htb]
    \centering
    \includegraphics[width=\linewidth]{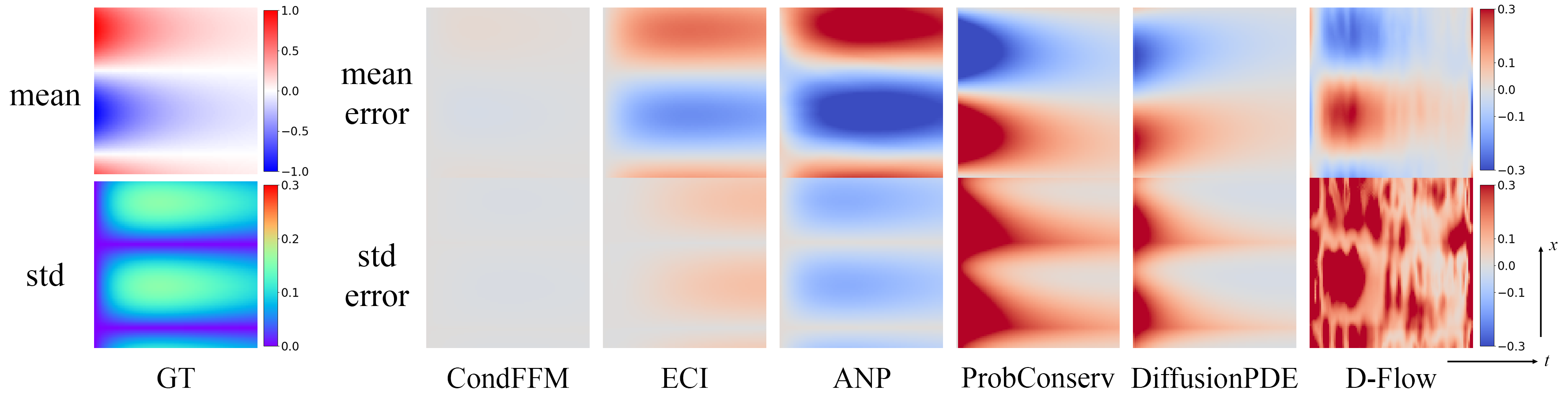}
    \vspace{-1.5em}
    \caption{Generation mean and standard deviation errors for the heat equation with IC fixed.}
    \label{fig:heat_error}
\end{figure}

\begin{figure}[htb]
    \centering
    \includegraphics[width=\linewidth]{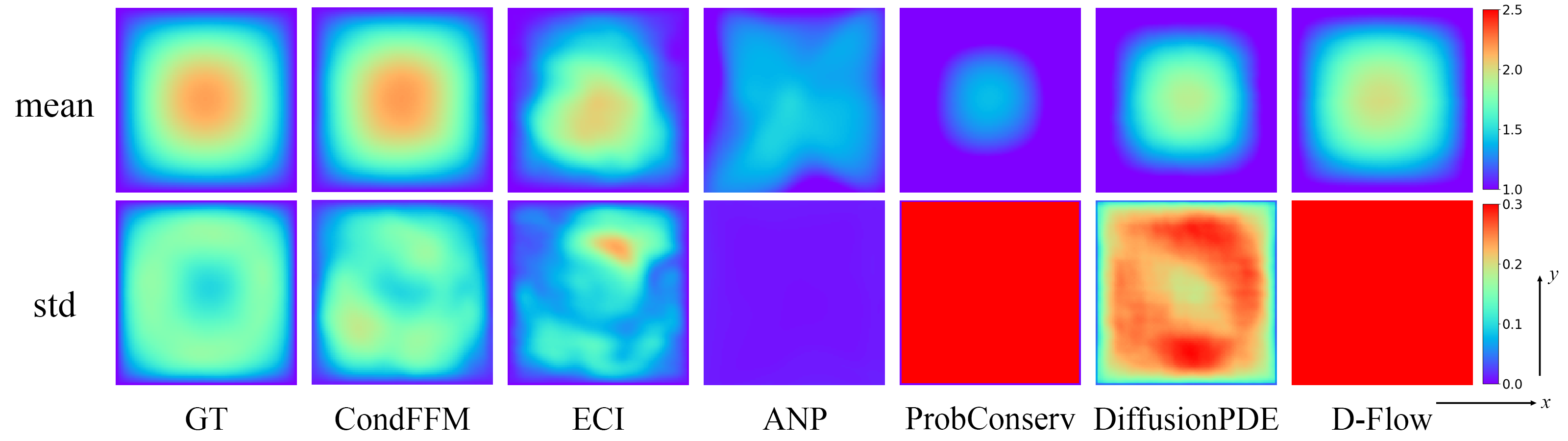}
    \vspace{-1.5em}
    \caption{Generation mean and standard deviation for the Darcy flow with BC fixed. Note that the scaling range is different and large variance values are cropped for a better comparison.}
    \label{fig:darcy}
\end{figure}

\begin{figure}[htb]
    \centering
    \includegraphics[width=\linewidth]{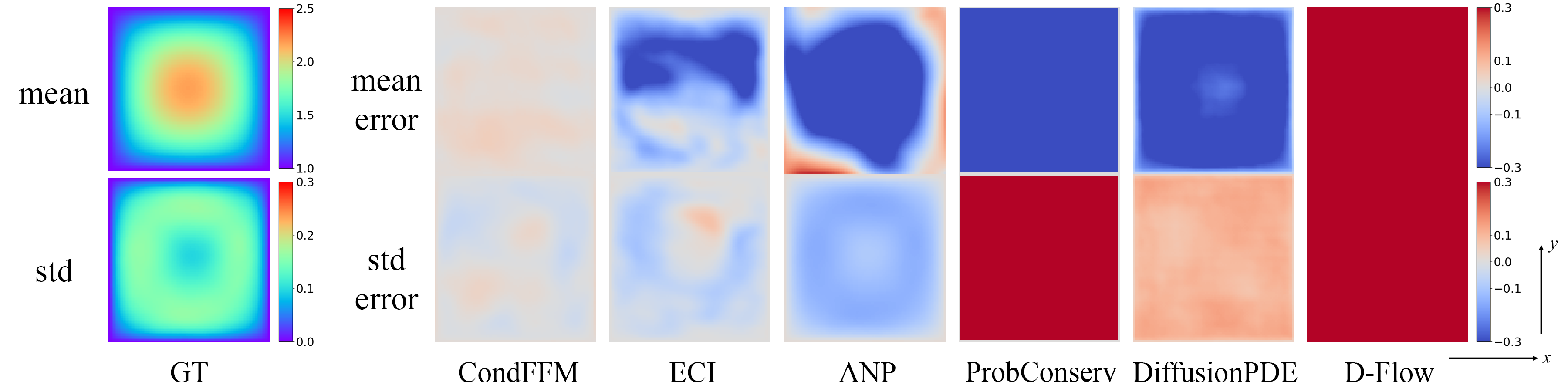}
    \vspace{-1.5em}
    \caption{Generation mean and standard deviation errors for the Darcy flow with BC fixed. Note that large error values are cropped for a better comparison.}
    \label{fig:darcy_error}
\end{figure}

\begin{figure}[htb]
    \centering
    \includegraphics[width=\linewidth]{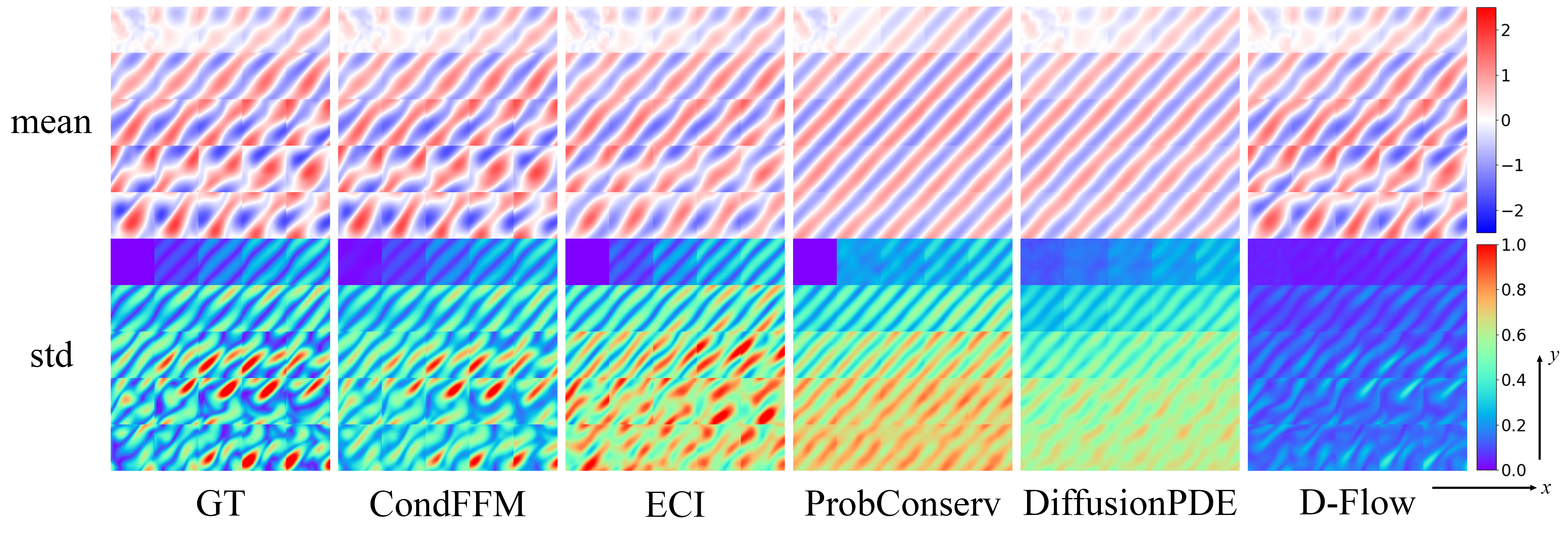}
    \vspace{-1.5em}
    \caption{Generation mean and standard deviation for the 2D NS equation with IC fixed.}
    \label{fig:ns}
\end{figure}

We also noticed that D-Flow tended to lead high-variance generations. This is probably because a large number of Euler steps leads to an exceedingly complex dynamic that is difficult to optimize. Also, note that directly taking the gradient with respect to the initial noise may break the initial structure sampled from the Matérn kernel and may lead to initial noises never seen by the model during training. These two reasons may also account for the better performance of the D-Flow for the NS equation, in which we used fewer Euler steps and the standard white noises as the prior noise functions.

\subsection{Stokes Problem with Different Constraints}

\begin{table}[htb]
\centering
\caption{Generative metrics on the Stokes problem with different constraints. The best results for zero-shot methods (CondFFM is not zero-shot) are highlighted in bold.} \label{tab:stokes_more}
\resizebox{\linewidth}{!}{
\begin{tabular}{@{}llccccc|c@{}}
\toprule
Dataset & Metric & ECI & ANP & ProbConserv & DiffusionPDE & D-Flow & CondFFM \\ \midrule
\multirow{4}{*}{\begin{tabular}[c]{@{}l@{}}Stokes IC\\ $k=10$\end{tabular}} & MMSE / $10^{-2}$ & \textbf{0.113} & 4.616 & 1.792 & 0.512 & 0.774 & 0.307 \\
 & SMSE / $10^{-2}$ & \textbf{0.124} & 12.400 & 0.356 & 0.301 & 59.377 & 0.060 \\
 & CE / $10^{-2}$ & \textbf{0} & 13.032 & \textbf{0} & 0.068 & 3.004 & 0.011 \\
 & FPD & \textbf{0.156} & 6.694 & 3.619 & 0.340 & 11.013 & 0.454 \\ \midrule
\multirow{4}{*}{\begin{tabular}[c]{@{}l@{}}Stokes IC\\ $k=15$\end{tabular}} & MMSE / $10^{-2}$ & 0.190 & 11.148 & 0.797 & 0.216 & \textbf{0.088} & 0.125 \\
 & SMSE / $10^{-2}$ & \textbf{0.183} & 3.354 & 2.751 & 0.470 & 12.063 & 0.010 \\
 & CE / $10^{-2}$ & \textbf{0} & 18.278 & \textbf{0} & 0.194 & 13.149 & 0.010 \\
 & FPD & \textbf{0.105} & 6.509 & 6.425 & 0.832 & 4.745 & 0.020 \\ \midrule
\multirow{4}{*}{\begin{tabular}[c]{@{}l@{}}Stokes BC\\ $\omega=4$\end{tabular}} & MMSE / $10^{-2}$ & \textbf{0.676} & 9.273 & 3.034 & 1.488 & 1.728 & 0.031 \\
 & SMSE / $10^{-2}$ & \textbf{0.378} & 2.777 & 2.462 & 2.278 & 6.343 & 0.081 \\
 & CE / $10^{-2}$ & \textbf{0} & 115.335 & \textbf{0} & 39.99 & 14.553 & 0.012 \\
 & FPD & \textbf{1.671} & 5.309 & 1.686 & 3.954 & 10.043 & 0.071 \\ \midrule
\multirow{4}{*}{\begin{tabular}[c]{@{}l@{}}Stokes BC\\ $\omega=8$\end{tabular}} & MMSE / $10^{-2}$ & \textbf{0.042} & 4.115 & 9.008 & 5.794 & 3.801 & 0.028 \\
 & SMSE / $10^{-2}$ & \textbf{0.026} & 2.462 & 2.710 & 3.288 & 22.376 & 0.010 \\
 & CE / $10^{-2}$ & \textbf{0} & 73.916 & \textbf{0} & 160.397 & 51.056 & 0.017 \\
 & FPD & \textbf{0.109} & 6.663 & 0.843 & 9.356 & 13.202 & 0.111 \\ \bottomrule
\end{tabular}
}
\end{table}

We provide additional generative evaluation on the Stokes problem with various ICs/BCs in Table~\ref{tab:stokes_more} in addition to the results in Table~\ref{tab:gen_res}. Recall that during unconstrained pre-training, we varied the PDE parameter as $\omega\sim U[2,8],k\sim U[2,20]$, and in Table~\ref{tab:gen_res} we fixed $k=5$ or $\omega=6$. Combined with the results Table~\ref{tab:stokes_more}, it can be clearly demonstrated that our proposed ECI sampling can achieve consistent performance improvement over the baselines in a zero-shot manner with the flexibility to all different IC settings $k=5,10,15$ and BC settings $\omega=4,6,8$.

\subsubsection{Generative Superresolution}
\begin{table}[htb]
\centering
\caption{Generative metrics on the Stokes problem with the zero-shot superresolution setting to $200\times 200$. The best results are highlighted in bold.} \label{tab:stokes_superres}
\resizebox{\linewidth}{!}{
\begin{tabular}{@{}llccccc|c@{}}
\toprule
Dataset & Metric & ECI & ANP & ProbConserv & DiffusionPDE & D-Flow & CondFFM \\ \midrule
\multirow{4}{*}{Stokes IC} & MMSE / $10^{-2}$ & \textbf{0.274} & 14.120 & 4.338 & 5.109 & \multirow{4}{*}{OOM} & 4.242 \\
 & SMSE / $10^{-2}$ & \textbf{0.515} & 8.720 & 2.933 & 4.167 &  & 4.152 \\
 & CE / $10^{-2}$ & \textbf{0} & 3.563 & \textbf{0} & 5.392 &  & 12.508 \\
 & FPD & \textbf{2.647} & 26.965 & 15.977 & 19.654 &  & 24.798 \\ \midrule
\multirow{4}{*}{Stokes BC} & MMSE / $10^{-2}$ & \textbf{0.863} & 2.819 & 5.223 & 1.610 & \multirow{4}{*}{OOM} & 4.669 \\
 & SMSE / $10^{-2}$ & \textbf{0.418} & 2.870 & 1.491 & 1.768 &  & 2.706 \\
 & CE / $10^{-2}$ & \textbf{0} & 29.168 & \textbf{0} & 0.068 &  & 21.567 \\
 & FPD & \textbf{1.903} & 5.282 & 2.021 & 3.830 &  & 9.504 \\ \bottomrule
\end{tabular}
}
\end{table}

In Table~\ref{tab:stokes_superres}, we provide the evaluation results under the zero-shot superresolution setting, where the generation resolution is $200\times 200$, four times larger than the training set. It is interesting to see that the conditional FFM is the most sensitive method in such a zero-shot setting, probably due to the hard-coded connection of the masked input as the condition.

\subsubsection{Additional Metrics for NS Equation}
We provide the per-frame FPD scores for the NS equation generation results in Figure~\ref{fig:fpd}. We noted that ECI sampling tended to have more accurate generations for frames close to the initial condition, with the FPD scores increasing with the time frames. In gradient-based models of DiffusionPDE and D-Flow, we also noticed a peak in the FPD scores at an early stage, which should have been easier for the model. As expected, CondFFM outperformed all zero-shot guidance models by a large margin in such a complex task for almost all frames.

\begin{figure}[htb]
    \centering
    \includegraphics[width=0.6\linewidth]{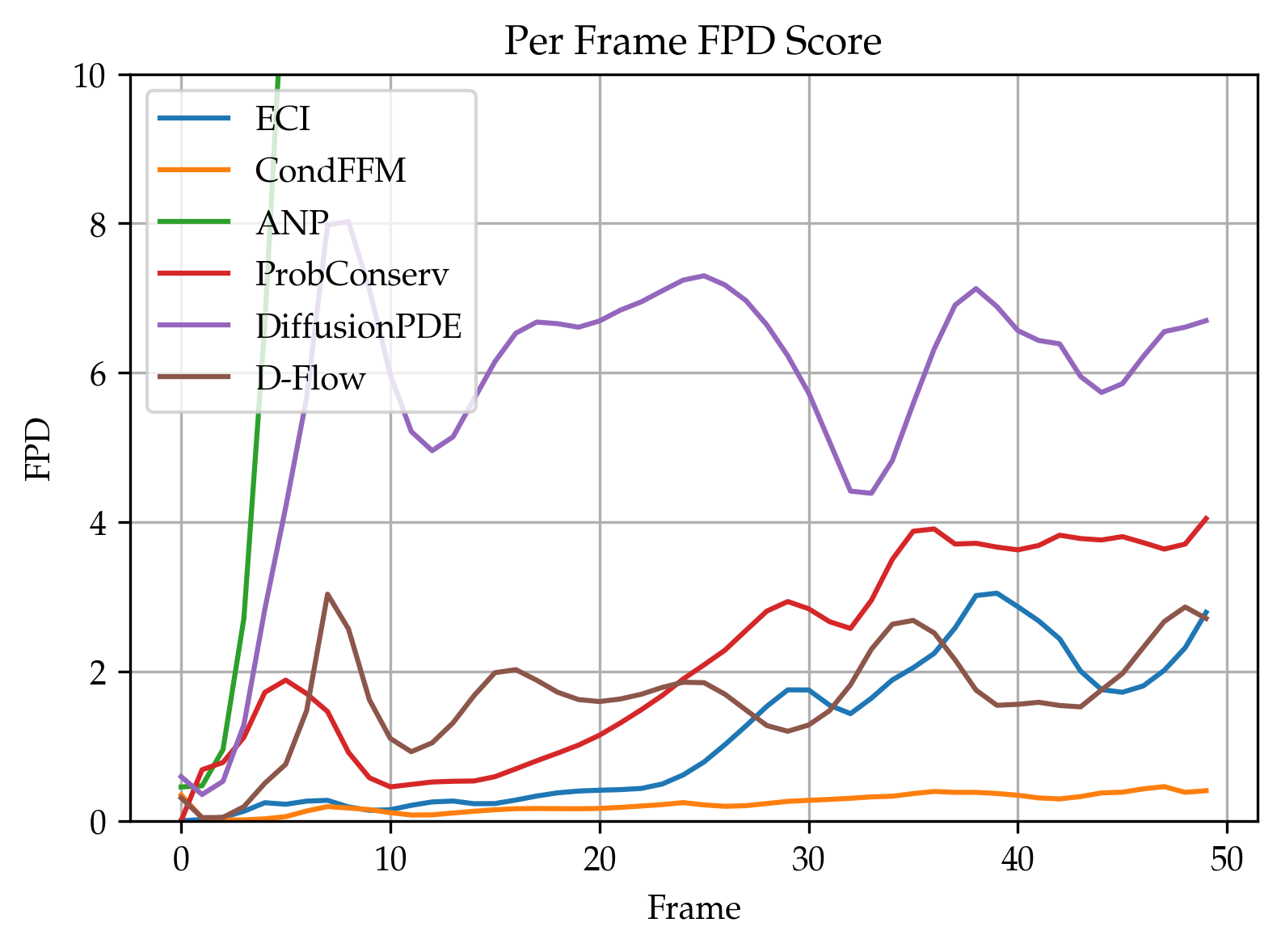}
    \vspace{-1em}
    \caption{Per frame FPD score for the NS equation with IC fixed.}
    \label{fig:fpd}
\end{figure}

\subsection{Regression Tasks}\label{sec:more_res_reg}
\subsubsection{Uncertainty Quantification}
For the uncertainty quantification tasks, the specific snapshot at the test time as described in Appendix~\ref{sec:data} (following the same settings in \citep{hansen2023learning}) is plotted in Figure~\ref{fig:uq}, with the generation mean and 3 times of the generation standard deviation as the confidence interval. Our results share a similar trend described in \citep{hansen2023learning}, in which they also observed more uncertainty with the increasing difficulty from the heat equation, PME, to the Stefan problem. Specifically, the solution in the Stefan problem has a shock in the function value, which leads to higher uncertainty around the shock position. For other linear parts of the solution, our ECI sampling generation is able to achieve quite good predictions with low variance and low uncertainty.

\begin{figure}[ht]
    \centering
    \begin{minipage}{0.33\textwidth}
        \centering
        \includegraphics[width=\linewidth]{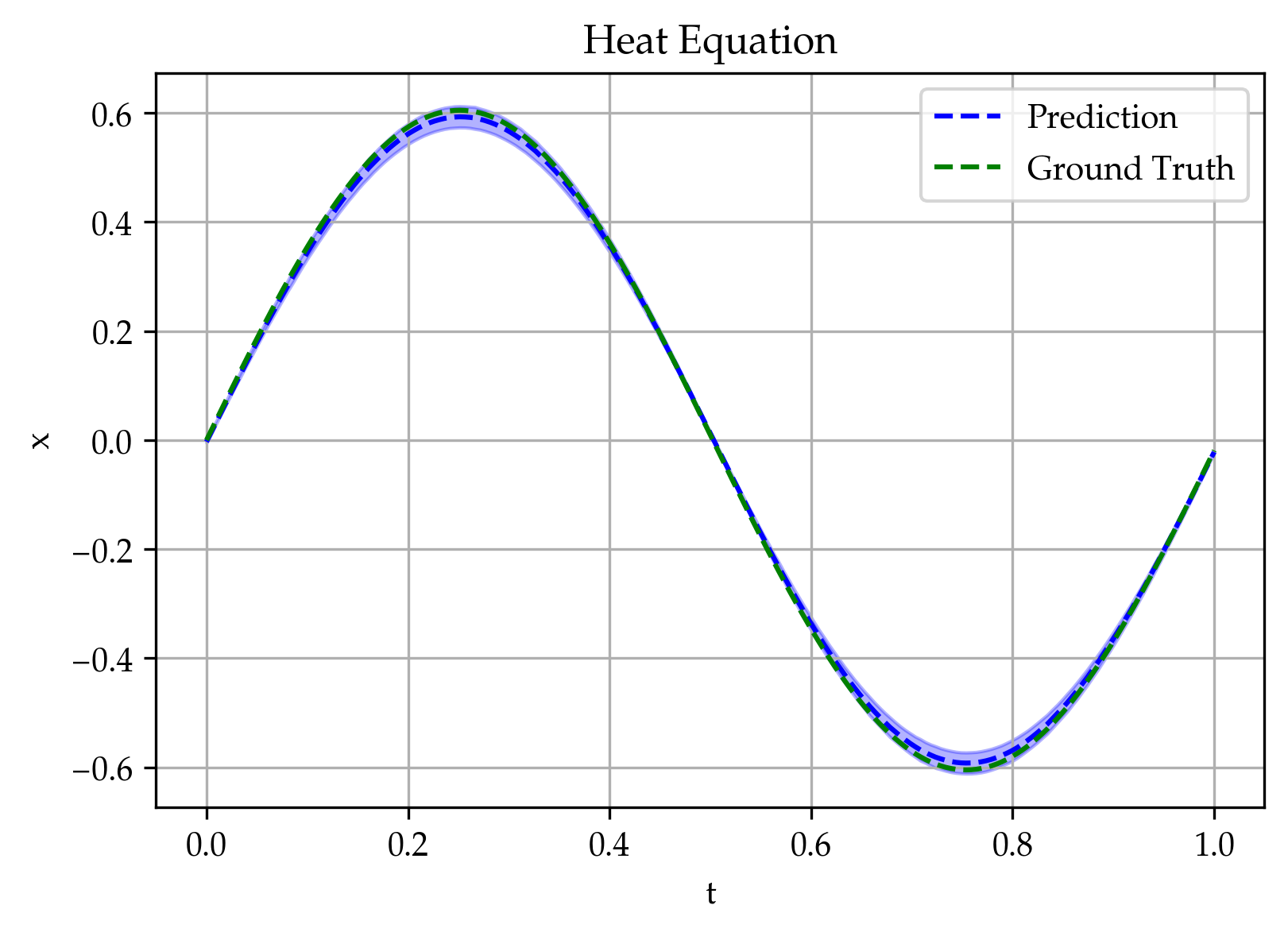}
    \end{minipage}%
    \begin{minipage}{0.33\textwidth}
        \centering
        \includegraphics[width=\linewidth]{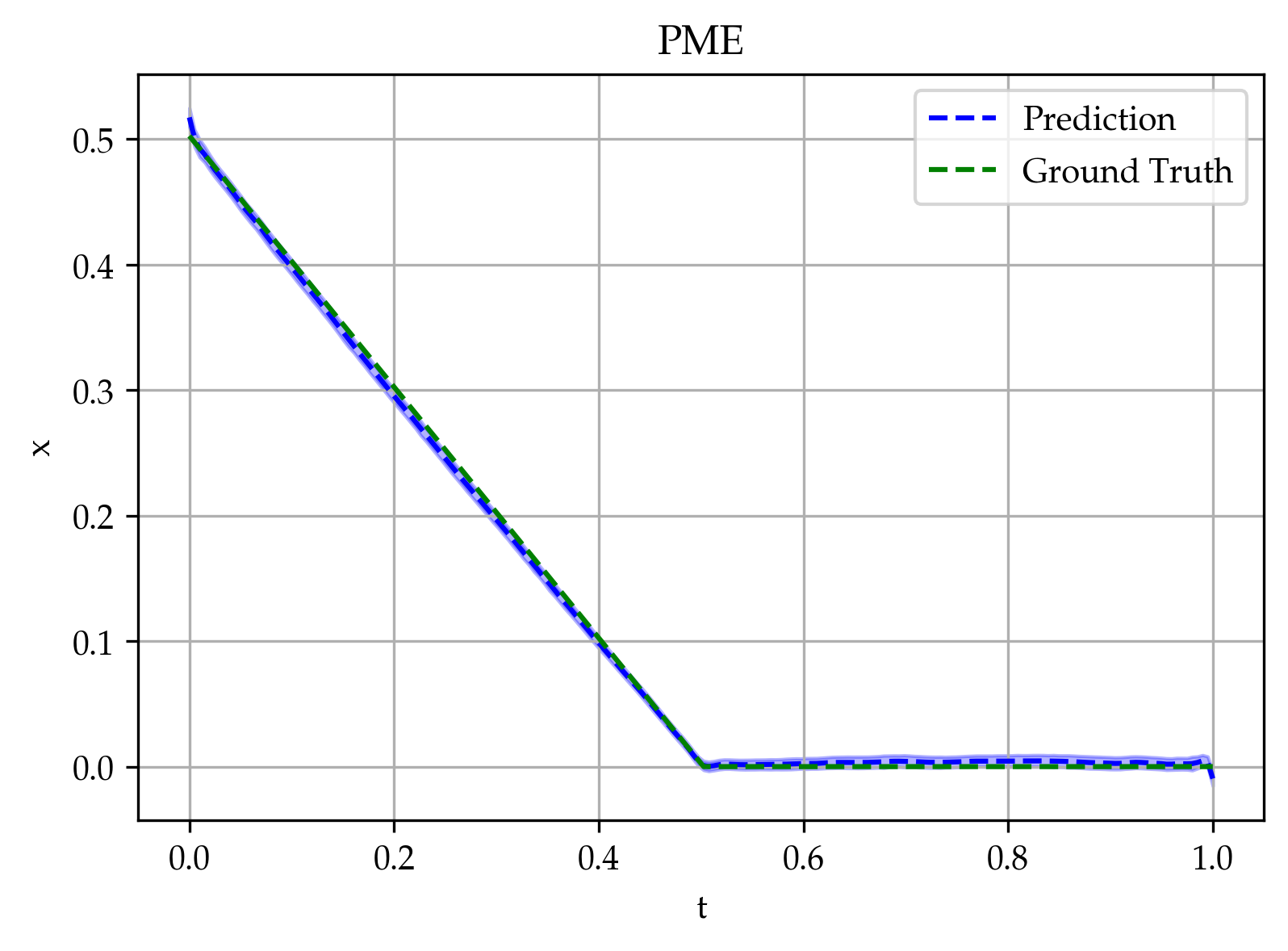}
    \end{minipage}%
    \begin{minipage}{0.33\textwidth}
        \centering
        \includegraphics[width=\linewidth]{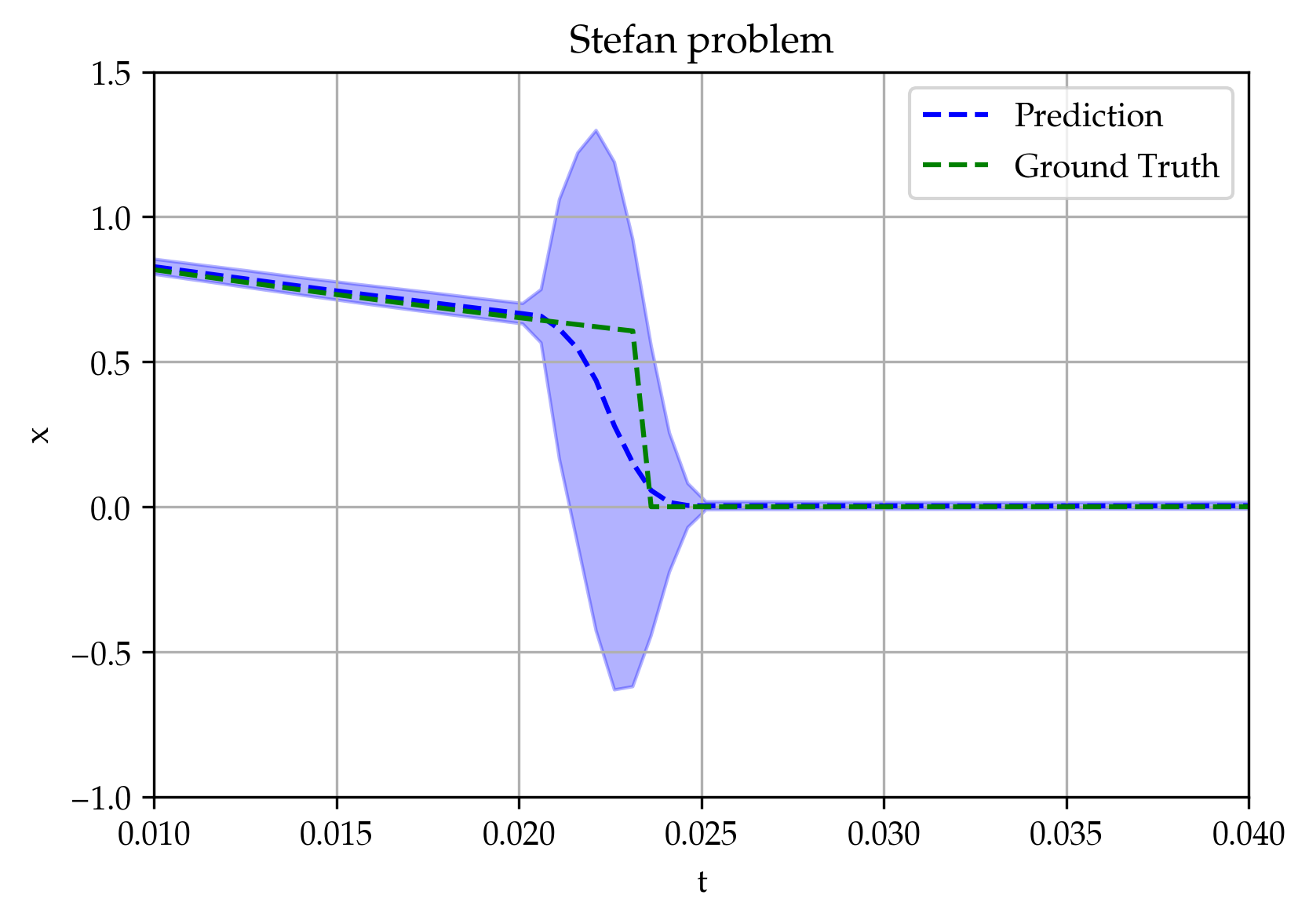}
    \end{minipage}
    \caption{Uncertainty quantification visualization results for three instances of the Generalized Porous Medium Equation (GPME). The $3\sigma$ confidence intervals are plotted.}
    \label{fig:uq}
\end{figure}

\subsubsection{Neural Operator Learning}
For NO learning tasks, we provide additional visualizations of the generation statistics and the reference FNO predictions in Figure~\ref{fig:stokes_reg} (Stokes problem) and \ref{fig:ns_reg} (NS equation). It can be demonstrated that, though not directly trained on such regression tasks, our ECI sampling naturally produces generation results with low variance in such scenarios. In comparison, other models often have a variance scale similar to those in the generative tasks, making them less suitable for such zero-shot regression tasks.

\begin{figure}[htb]
    \centering
    \includegraphics[width=\linewidth]{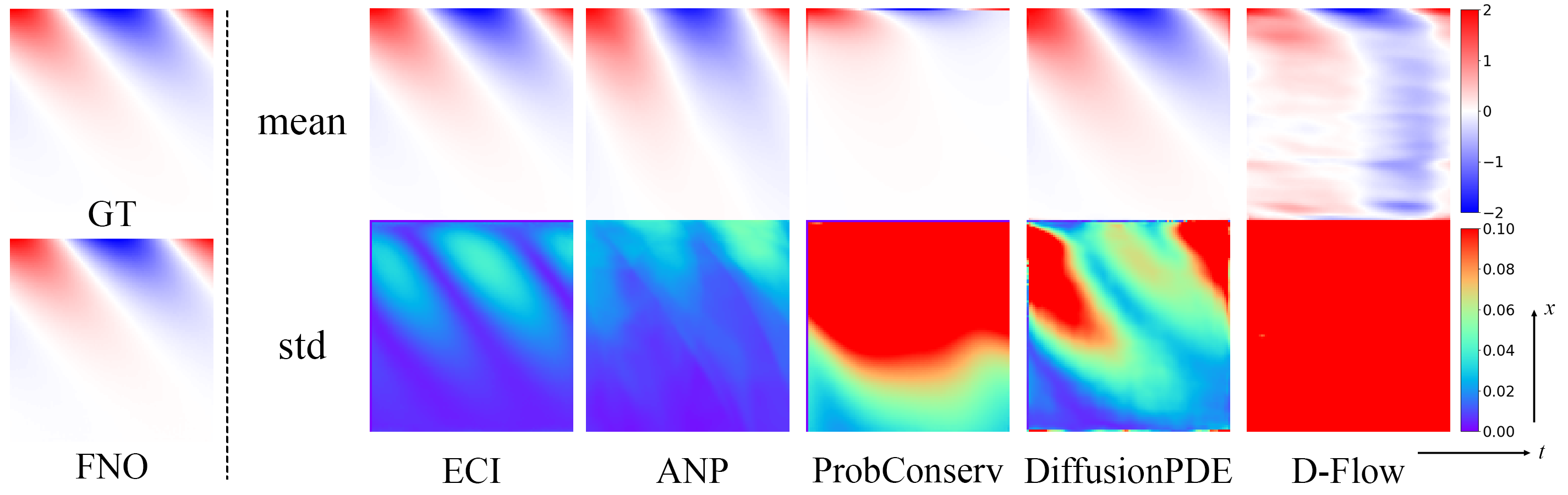}
    \vspace{-1.5em}
    \caption{Regression mean and standard deviation for the Stokes problem with both IC and BC fixed. Note that the scaling range for the standard deviation is different from that in the generative task.}
    \label{fig:stokes_reg}
\end{figure}

\begin{figure}[htb]
    \centering
    \includegraphics[width=\linewidth]{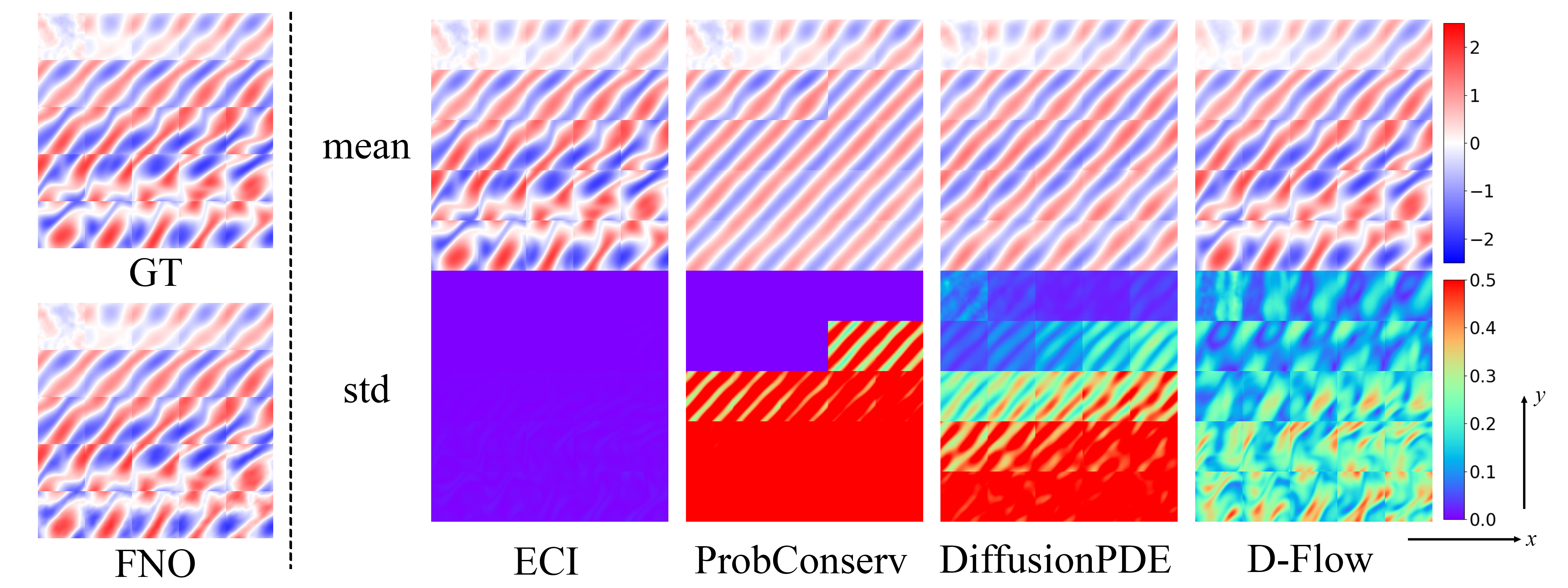}
    \vspace{-1.5em}
    \caption{Regression mean and standard deviation for the NS equation with the first 15 frames as the constraint. Note that the scaling range for the standard deviation is different from that in the generative task.}
    \label{fig:ns_reg}
\end{figure}

\subsection{More Ablation Studies}\label{sec:more_res_abl}
\subsubsection{Stokes Problem FPD}
To provide a more thorough investigation into the impact of mixing iterations $M$ and re-sampling interval $R$ on the final performance, we further calculate the FPD scores in Figure~\ref{fig:ablation_fpd} for the two settings for the Stokes problem in addition to Figure~\ref{fig:ablation}. The FPD scores are more consistent than the MMSE and SMSE. For the Stokes IC problem with higher variance, generations with no re-sampling work the best, with the FPD scores peaking at 10 mixing iterations. For the Stokes BC problem with lower variance, the best performance combination of hyperparameters has a diagonal pattern with more mixing iterations pairing best with a smaller re-sampling interval. Such results are also consistent with our heuristic suggestions on choosing the best $M$ and $R$ for different tasks with different variances.

\begin{figure}[ht]
    \centering
    \begin{minipage}{0.5\textwidth}
        \centering
        \includegraphics[width=\linewidth]{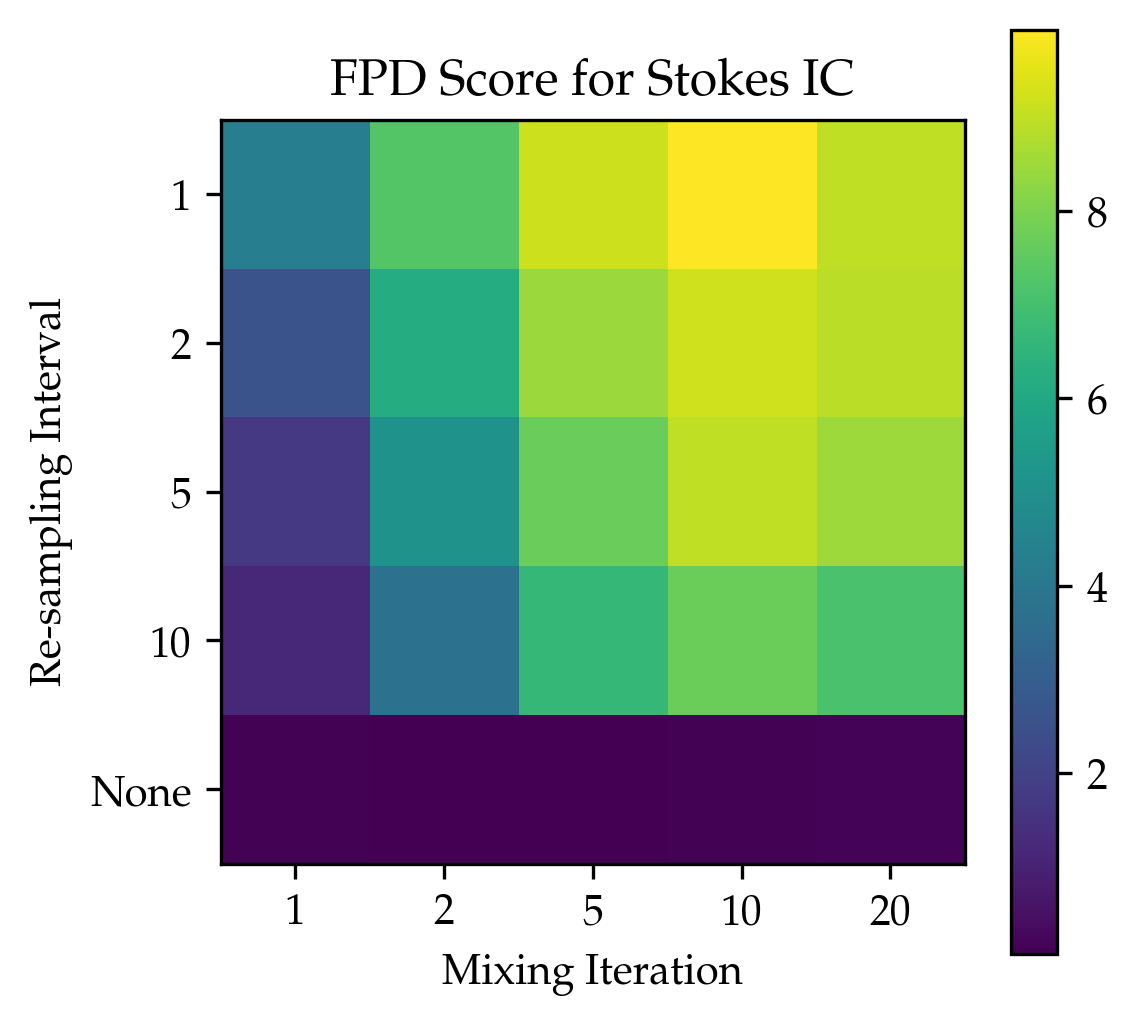}
    \end{minipage}%
    \begin{minipage}{0.5\textwidth}
        \centering
        \includegraphics[width=\linewidth]{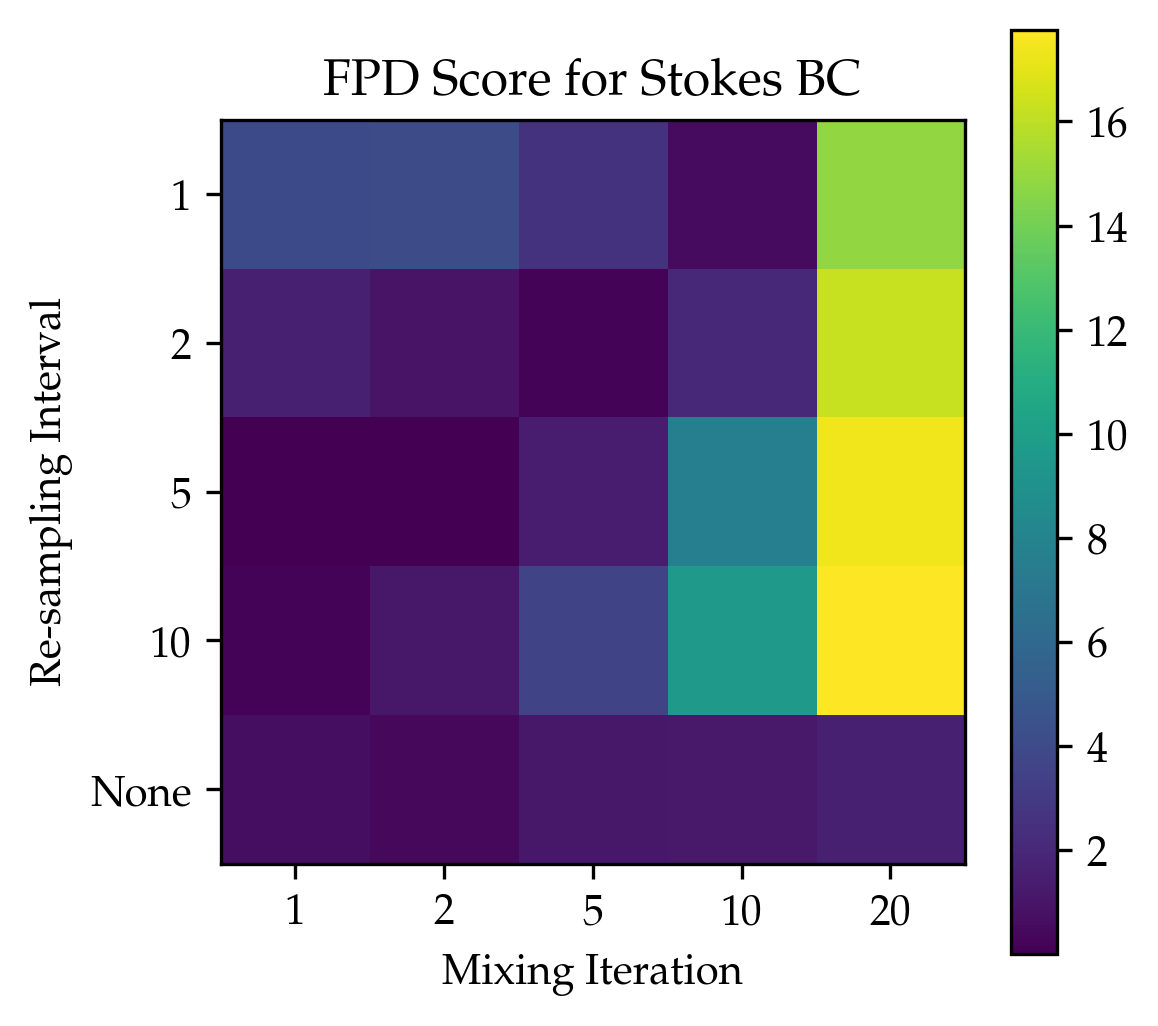}
    \end{minipage}
    \vspace{-1em}
    \caption{FPD score of ECI sampling with different mixing iterations and re-sampling intervals in the Stokes problem with IC (left) or BC (right) fixed.}
    \label{fig:ablation_fpd}
\end{figure}

\subsubsection{Prior Generative Model}
We further provide the ablation studies on the impact of the quality of the prior generative model. Intuitively, a stronger generative prior that better captures the underlying information of the PDE system can be easier to guide. The results are demonstrated in Table~\ref{tab:stokes_iter}, with checkpoints from 5k, 10k, and 20k training iterations. A clear trend of better guided-generation performance can be observed in both IC and BC constraints. In this way, the quality of FFM indeed plays an important role in our ECI sampling framework.

\begin{table}[htb]
\centering
\caption{Generative metrics on the Stokes problem with different constraints using the different FFM checkpoints with different training iterations.} \label{tab:stokes_iter}
\begin{tabular}{@{}lcccccc@{}}
\toprule
Dataset & \multicolumn{3}{c}{Stokes IC} & \multicolumn{3}{c}{Stokes BC} \\ \cmidrule(l){2-4} \cmidrule(l){5-7}
\#iter & 10k & 15k & 20k & 10k & 15k & 20k \\ \midrule
MMSE / $10^{-2}$ & 1.273 & 0.601 & 0.090 & 0.888 & 0.011 & 0.005 \\
SMSE / $10^{-2}$ & 1.693 & 0.177 & 0.127 & 0.868 & 0.015 & 0.003 \\
FPD & 1.863 & 1.031 & 0.076 & 3.101 & 0.009 & 0.010 \\ \bottomrule
\end{tabular}
\end{table}

\subsubsection{Amount of Constraint Information}
We also provide additional ablation studies on ECI sampling behavior as a regression model. Specifically, we explore the regression losses when more information is provided in the constraint. Intuitively, serving as a deterministic regression model, the ECI generation should exhibit fewer errors and less variance with more information provided. In our experimental setup, we fixed the number of mixing iterations to 5 and the re-sampling interval to 1. To control the information in the constraint, we varied the number of initial frames fed into the model as the value constraint from 5 to 25. As our ECI sampling guarantees the exact satisfaction of the constraint, we instead calculated the mean MSE and standard deviation MSE based on the unconstrained region for a fair comparison between different settings. 
The losses are plotted in Figure~\ref{fig:ns_ablation_loss}, in which a clear and almost linear trend in the log scale can be distinguished for both metrics as the number of constrained frames increases. Similarly, the visualization of the mean error and the logarithm of the standard deviation are provided in Figure~\ref{fig:ns_ablation}. The model made more accurate predictions with less uncertainty as the number of constrained frames increased. The regions with large mean errors also matched those with large variance (uncertainty). In this sense, our ECI sampling provides a bonus to transform the pre-trained unconditional FFM model into a decent zero-shot uncertainty quantification model with potentially wider applications.

\begin{figure}[ht]
    \centering
    \includegraphics[width=0.6\linewidth]{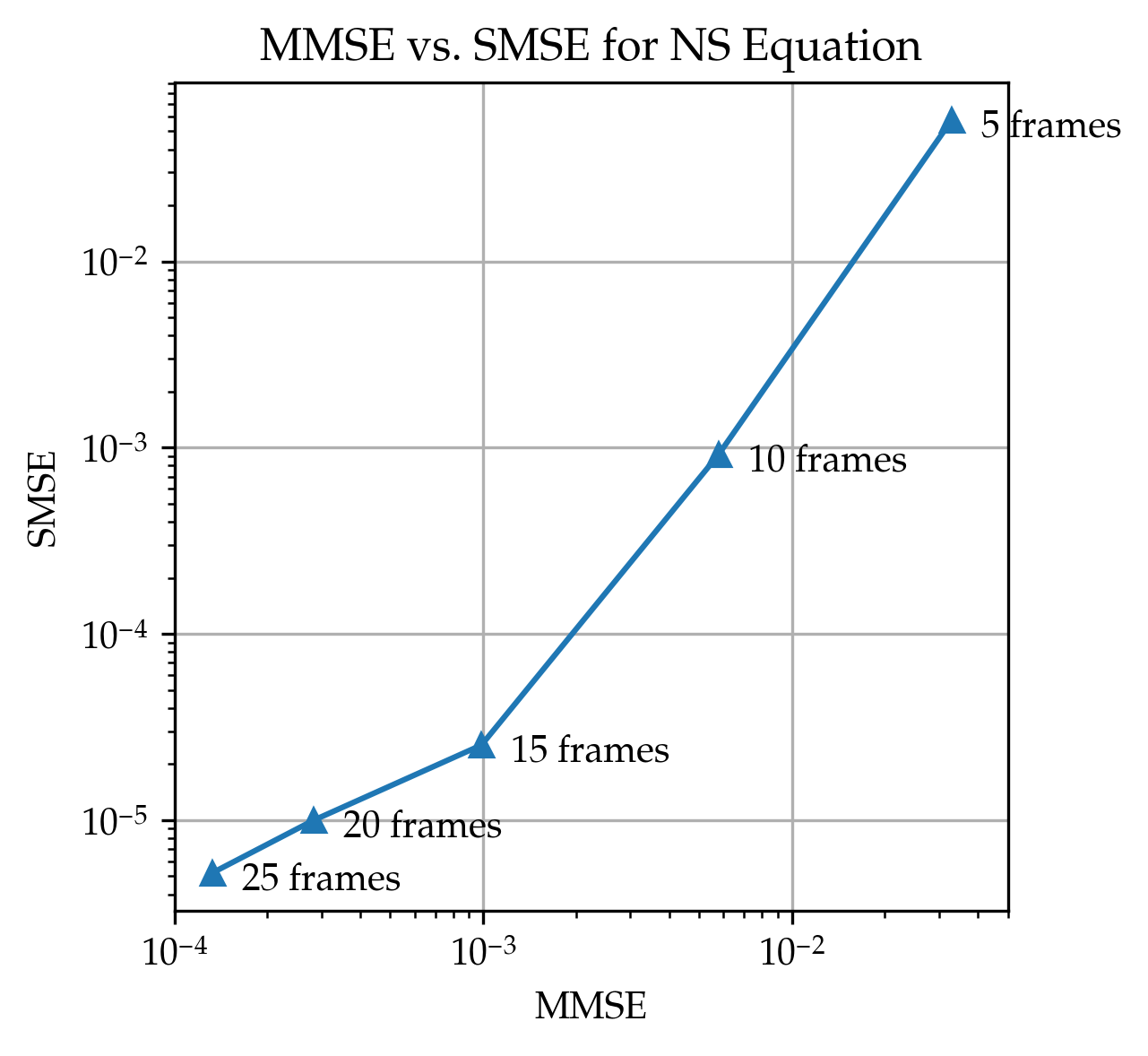}
    \vspace{-1em}
    \caption{Regression MMSE and SMSE for the NS equation with different numbers of frames fixed as the constraints. Notice a clear and almost log-linear trend for the decrease in both metrics as the number of constrained frames increases.}
    \label{fig:ns_ablation_loss}
\end{figure}

\begin{figure}[ht]
    \centering
    \includegraphics[width=\linewidth]{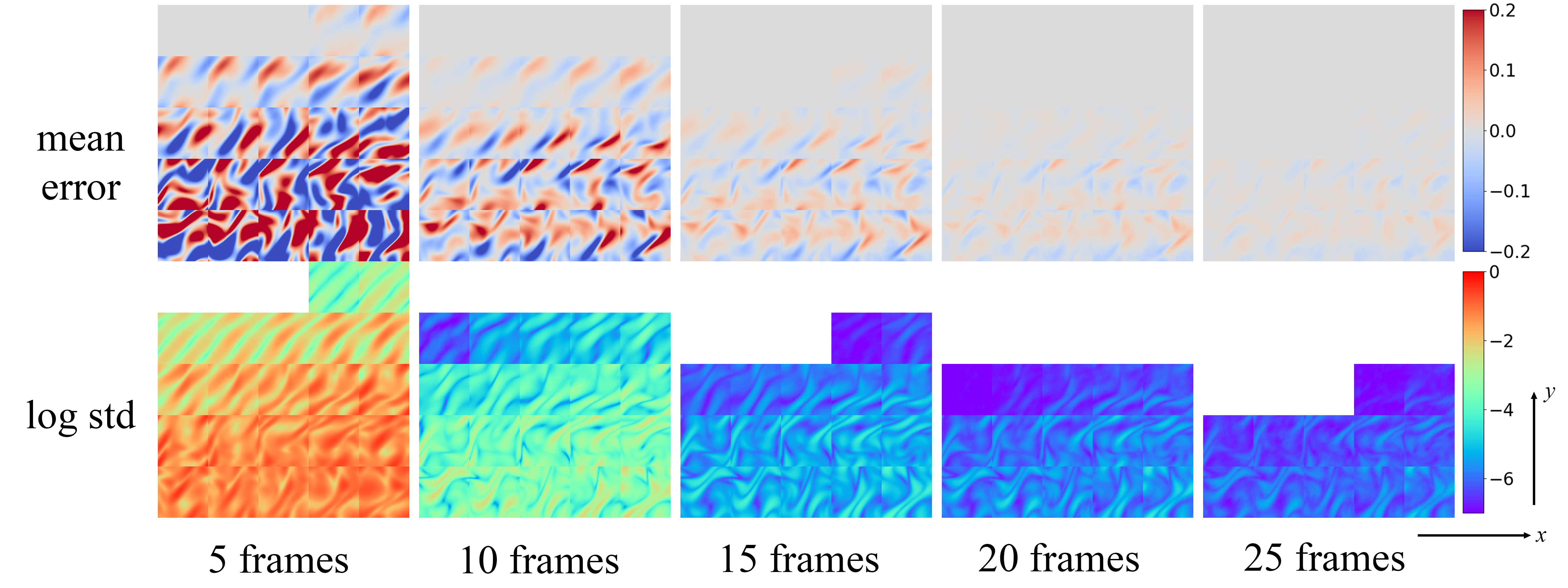}
    \vspace{-2em}
    \caption{Regression error of the mean and logarithm of the standard deviation for the NS equation with different numbers of frames fixed as the constraints. We use the same scale for plotting different settings.}
    \label{fig:ns_ablation}
\end{figure}

\end{document}